\documentclass{article}
\PassOptionsToPackage{square,numbers,sort&compress}{natbib}
\usepackage[preprint]{neurips_2026}
\usepackage[utf8]{inputenc} 
\usepackage[T1]{fontenc}    

\usepackage{booktabs}       
\usepackage{amsfonts}       
\usepackage{nicefrac}       
\usepackage{microtype}      
\usepackage{xcolor}         
\usepackage{xspace}
\usepackage{graphicx}
\usepackage{wrapfig}
\usepackage{wrapfig}
\usepackage{booktabs}
\usepackage[hidelinks]{hyperref} 
\usepackage{makecell}   
\usepackage{amsmath, amsthm, amssymb}
\usepackage{algorithm}
\usepackage{algorithmicx}
\usepackage[noend]{algpseudocode}
\usepackage{multirow} 
\usepackage{subcaption} 
\usepackage{caption}
\usepackage{float}  
\usepackage{algorithm}
\usepackage{algpseudocode}
\usepackage[compact]{titlesec}
\usepackage{enumitem}

\theoremstyle{plain}
\newtheorem{theorem}{Theorem}
\newtheorem{proposition}{Proposition}
\newtheorem{lemma}{Lemma}

\newtheorem{corollary}{Corollary}
\newtheorem{problem}{Problem}

\newtheorem{assumption}{Assumption}

\newtheorem{innercustomthm}{Theorem}
\newenvironment{customthm}[1]
  {\renewcommand\theinnercustomthm{#1}\innercustomthm}
  {\endinnercustomthm}

\newtheorem{innercustomcor}{Corollary}
\newenvironment{customcor}[1]
  {\renewcommand\theinnercustomcor{#1}\innercustomcor}
  {\endinnercustomcor}

\newtheorem{innercustomprop}{Proposition}
\newenvironment{customprop}[1]
  {\renewcommand\theinnercustomprop{#1}\innercustomprop}
  {\endinnercustomprop}

\newtheorem*{theorem*}{Theorem}
\newtheorem*{proposition*}{Proposition}
\newtheorem*{lemma*}{Lemma}
\newtheorem*{property*}{Property}
\newtheorem*{definition*}{Definition}
\newtheorem*{corollary*}{Corollary}

\theoremstyle{definition}

\newcommand{\vparagraph}[1]{
  {\noindent \textbf{#1}}
}

\title{Dynamic Shapley Computation}
\author{%
  Xuan Yang$^1$ \quad Hsi-Wen Chen$^2$ \quad Ming-Syan Chen$^2$ \quad Jian Pei$^1$ \\
  $^1$Duke University \quad $^2$National Taiwan University \\
  \texttt{\{xuan.yang, j.pei\}@duke.edu} \quad
  \texttt{\{hwchen, mschen\}@ntu.edu.tw}
}
\begin{document}

\maketitle
\begin{abstract}
Shapley-based data valuation provides a principled way to quantify the contribution of training data, but its high computational cost makes it impractical in dynamic settings where tasks and training players evolve. Existing methods treat Shapley computation as a one-shot process and collapse contributions into aggregated scores, preventing reuse and requiring recomputation under any change. We introduce a new perspective that represents Shapley values as a \textbf{player-by-task matrix} and formulates dynamic valuation as a \textbf{structured matrix maintenance problem}. We exploit the fact that each task depends on a small subset of training players and that similar tasks yield similar valuations, leading to \textbf{utility locality} and \textbf{coalition locality}. Based on these insights, we propose \textbf{D-Shap}, a dynamic valuation framework that enables efficient updates by modifying only a small portion of the matrix: new task valuations are inferred via structure-aware interpolation, while updates induced by new players are confined to affected local matrix blocks. To eliminate the need for pre-specified evaluation tasks, we introduce \textbf{self-valuation}, which constructs the initial matrix directly from training data, supported by scalable subset reuse and coverage-aware anchor selection. Experiments across diverse models show that D-Shap performs task updates in milliseconds and reduces the cost of player updates by up to three orders of magnitude, while achieving valuation quality competitive with full recomputation.
\end{abstract}
\section{Introduction}
\label{sec:intro}

The \textbf{Shapley value}~\cite{shapley1953value} has become a standard tool for data valuation in machine learning~\cite{ghorbani2019data}, supporting applications such as data pricing~\cite{liu2021dealer}, dataset curation~\cite{bhardwaj2024state}, and model debugging~\cite{adebayo2020debugging}. In these applications, Shapley value evaluates the contribution of each \textbf{contributor} (player) to a given \textbf{task}. Here, a \textbf{task} refers to an evaluation objective defined by a utility function, such as model performance on a query, a subset of data, or an application-specific criterion~\cite{jia2019towards}\cite{liu2022gtg}. In practice, tasks can be instantiated by individual queries, batches of queries, or downstream evaluation workloads~\cite{Lundberg2020TreeSHAP,jiang2023opendataval}.

Modern applications are inherently \emph{dynamic}: new tasks continuously arise, and new data contributors expand the training set~\cite{de2021continual,parisi2019continual}. As a result, \textbf{dynamic Shapley value computation}—updating valuations efficiently under evolving tasks and training data—becomes a fundamental requirement.

At a high level, Shapley value quantifies the contribution of each training player by averaging its marginal contribution over all subsets~\cite{shapley1953value}. In dynamic settings, this leads to two core challenges: (i) \textbf{task-incremental updates}, where valuations must be computed for newly arriving tasks, and (ii) \textbf{player-incremental updates}, where new training players alter the underlying cooperative game and affect existing valuations. In practice, the two types of updates often occur \emph{simultaneously}, requiring the valuation method to handle evolving tasks and players in a unified manner. Existing methods are fundamentally \emph{static}: they assume fixed data and fixed tasks, and therefore must recompute from scratch for every update~\cite{ghorbani2019data}\cite{sun2024shapley}. The few works on dynamic Shapley~\cite{zhang2023dynamic,xia2025computing} handle limited scenarios and still rely on global recomputation, making them unable to scale to realistic workloads.

The root cause is the prohibitive cost of Shapley computation. Exact computation is \#P-hard~\cite{deng1994complexity}, requiring exponentially many coalition evaluations. Even approximate methods~\cite{ghorbani2019data,Castro2009Polynomial,koh2017understanding,basu2020influence,wang2024data} remain expensive, as each evaluation typically involves training or updating a machine learning model. In dynamic settings, repeatedly retraining models for every update is infeasible. More fundamentally, existing approaches reduce Shapley values to a single aggregated score per training player~\cite{ghorbani2019data,wang2023data}, discarding how contributions vary across different tasks. This loss of information prevents reuse and forces global recomputation.

This paper introduces a fundamentally different perspective: \textbf{Shapley values should be represented as a collection of per-task contributions that can be maintained and reused}. Instead of treating valuation as a one-shot computation, we explicitly model how each training player contributes to each task. Our key insight is that, for a given task, the model typically depends only on a small subset of relevant training instances~\cite{yang2026local,yeh2018representer,xu2018representation,jia2019efficient}. This \textbf{local dependency} implies that both task-incremental and player-incremental updates can be performed by focusing only on the affected subsets, rather than recomputing globally.

Building on this insight, we propose \textbf{D-Shap}, a new framework for \textbf{dynamic Shapley value computation}. D-Shap represents Shapley values as a \textbf{player-by-task matrix}, where each entry records the contribution of a training instance to a task. This representation preserves fine-grained information that is lost in prior aggregation-based methods and enables two key operations: (i) \textbf{task-incremental updates}, which estimate valuations for new tasks by leveraging similar existing tasks, and (ii) \textbf{player-incremental updates}, which update only the affected parts of the matrix when new training players arrive. By maintaining and reusing this matrix, D-Shap avoids repeated global recomputation.\footnote{D-Shap also supports deletion and replacement: task-side updates remove or replace the corresponding Shapley-matrix column, while player-side updates identify the coalitions affected by the player change and update them selectively.}
\begin{figure}[t]
  \centering
  \includegraphics[width=.96\textwidth]{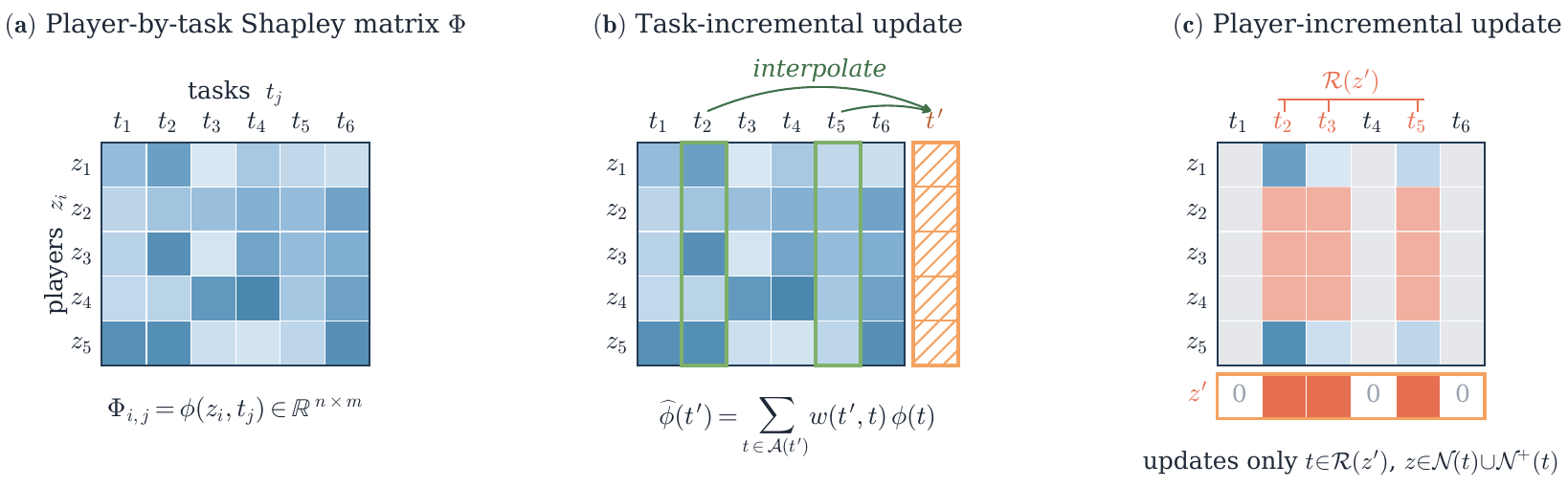}
    \caption{\textbf{Overview of D-Shap.}
    \textbf{(a)} D-Shap represents Shapley values as a player-by-task matrix $\boldsymbol{\Phi}$, where $\boldsymbol{\Phi}_{i,j}=\phi(z_i,t_j)$ denotes player $z_i$'s contribution to task $t_j$.
    \textbf{(b)} \emph{Task-incremental update} (Sec.~\ref{sec:test}): a new task $t'$ is valued by interpolating nearby columns (green) under the model-induced distance $d_\Gamma$.
    \textbf{(c)} \emph{Player-incremental update} (Sec.~\ref{sec:player}): a new player $z'$ affects only columns whose local games involve it (coral); other entries (gray) stay frozen.}
\label{fig:overview}
\end{figure}

This perspective transforms Shapley computation from an exponential, one-shot procedure into a \textbf{reusable and incremental process}. It opens a new direction for scalable data valuation in dynamic machine learning systems, making Shapley-based methods practical in settings that were previously computationally prohibitive.
\section{Dynamic Shapley Computation}
\label{sec:problem}

\vparagraph{Shapley value for data valuation in Machine Learning.}
Let $\mathcal{D}=\{z_i\}_{i=1}^{n}$ be a set of \textbf{training contributors} (players), and let $\mathcal{T}=\{t_j\}_{j=1}^{m}$ be a set of \textbf{tasks}. Each task $t\in\mathcal{T}$ specifies an evaluation objective through a utility function.

Given a task $t$, the utility of a subset $\mathcal{S}\subseteq\mathcal{D}$ is $v_t(\mathcal{S}) = g(\theta(\mathcal{S}), t)$, where $\theta(\mathcal{S})$ is the model trained on $\mathcal{S}$, and $g(\cdot,t)$ evaluates the model on task $t$ (e.g., loss, accuracy, or confidence)~\cite{ghorbani2019data,jia2019efficient}.

The \textbf{Shapley value}~\cite{shapley1953value,osborne1994course,ghorbani2019data} of training player $z\in\mathcal{D}$ with respect to task $t$ is
\begin{equation}
\label{eq:value}
\phi(z,t)
=
\frac{1}{n}
\sum_{\mathcal{S}\subseteq \mathcal{D}\setminus\{z\}}
\frac{
v_t(\mathcal{S}\cup\{z\})-v_t(\mathcal{S})
}{
\binom{n-1}{|\mathcal{S}|}
}.
\end{equation}
It measures the average marginal contribution of $z$ to task $t$. However, computing $\phi(z,t)$ requires evaluating exponentially many subsets, making it \#P-hard~\cite{deng1994complexity}.

\vparagraph{The player-by-task Shapley matrix.}
Existing methods can compute $\phi(z,t)$ for each player--task pair. However, they typically \emph{aggregate} these values across tasks into a single score per player, or treat each task independently in a one-shot manner. As a result, the per-task contributions are not explicitly maintained or reused. This lack of a persistent representation fundamentally limits reuse: when tasks or players change, Shapley values must be recomputed from scratch.

Our key idea is to retain this information by representing Shapley values as a \textbf{player-by-task matrix},
\begin{equation}
\label{eqn:shapley_matrix}
\boldsymbol{\Phi} \in \mathbb{R}^{n\times m}, \quad
\boldsymbol{\Phi}_{i,j} = \phi(z_i,t_j),
\end{equation}
where each entry records the contribution of player $z_i$ to task $t_j$.

This matrix has a clear interpretation: each \textbf{row} $\boldsymbol{\Phi}_{i,:}$ captures how a player performs across tasks, while each \textbf{column} $\boldsymbol{\Phi}_{:,j}$ represents the Shapley values for a specific task. By explicitly modeling \emph{who contributes to which task}, this representation enables reuse across both players and tasks, which is not possible under aggregated formulations.

\vparagraph{Dynamic Shapley valuation.}
We study \textbf{dynamic updates} where tasks and players evolve over time.

\begin{problem}[\textbf{Task-Incremental Valuation}]
\label{prob:test_incremental}
Given a Shapley matrix $\boldsymbol{\Phi}$ and a set of new tasks
$\Delta \mathcal{T}=\{t_{m+1},\ldots,t_{m+\Delta m}\}$, estimate
\begin{equation}
[\boldsymbol{\Phi}_{:,1:m}
\mid
\boldsymbol{\Phi}^{\text{new}}_{:,\,m+1:m+\Delta m}]
\in \mathbb{R}^{n\times (m+\Delta m)},
\end{equation}
where $\boldsymbol{\Phi}^{\text{new}}$ contains the columns for the new tasks.
\end{problem}

Here, the player set is fixed, and the goal is to compute Shapley values for new tasks without recomputing existing ones.

\begin{problem}[\textbf{Player-Incremental Valuation}]
\label{prob:player_incremental}
Given a Shapley matrix $\boldsymbol{\Phi}$ and a set of new players
$\Delta \mathcal{D}=\{z_{n+1},\ldots,z_{n+\Delta n}\}$, estimate
\begin{equation}
\begin{bmatrix}
\boldsymbol{\Phi}^{\text{upd}}_{1:n,:} \\
\boldsymbol{\Phi}^{\text{new}}_{\,n+1:n+\Delta n,:}
\end{bmatrix}
\in \mathbb{R}^{(n+\Delta n)\times m}.
\end{equation}
\end{problem}

In this case, the cooperative game changes, requiring both new rows (for new players) and updates to existing rows. 

Our goal is to maintain the Shapley matrix incrementally with fewer model trainings than full recomputation by treating it as a \textbf{maintained object} and updating only the necessary parts rather than recomputing values from scratch. In practice, \textbf{task-incremental} and \textbf{player-incremental} updates may occur simultaneously, which requires a unified solution.
\section{D-Shap}
\label{sec:method}

Given the \textbf{player-by-task Shapley matrix} in Section~\ref{sec:problem}, our goal is to \textbf{maintain this matrix under dynamic updates}. Dynamic Shapley computation thus reduces to updating $\boldsymbol{\Phi}$ when new tasks arrive (adding columns) and when new players are introduced (adding rows and modifying affected entries). Rather than recomputing all values, \textbf{D-Shap} updates only a small portion of $\boldsymbol{\Phi}$ by exploiting the fact that each task depends on a limited subset of players and that similar tasks exhibit similar contribution patterns.

\subsection{D-Shap Framework}
\label{sec:framework}

Let $\boldsymbol{\Phi}\in\mathbb{R}^{n\times m}$ denote the maintained Shapley matrix. We consider two update operators: (i) a \textbf{task update}, which takes a new task $t'$, computes its Shapley column $\boldsymbol{\phi}(t')$, and appends it to $\boldsymbol{\Phi}$; and (ii) a \textbf{player update}, which takes a new player $z'$, adds a new row for $z'$, and updates only the entries whose values change under the expanded player set.

The central challenge is to identify the \textbf{affected entries} for each update. For a task $t$, let $\mathcal{N}(t)\subseteq\mathcal{D}$ denote the subset of training players that determine its utility. This induces a sparse dependency: $\phi(z,t)$ is nonzero only if $z\in\mathcal{N}(t)$~\cite{yang2026local}. Consequently, a task update requires estimating $\boldsymbol{\phi}(t')$ from related tasks, while a player update modifies only the columns whose dependency sets change.

This view reduces dynamic Shapley computation to \textbf{localized matrix updates}. In the following subsections, we formalize this dependency and derive efficient procedures for task-incremental updates, player-incremental updates, and matrix construction.

\subsection{Model-Induced Locality}
\label{sec:locality}

A key observation enabling efficient dynamic Shapley computation is that, in many machine learning models, the utility of a task is dominated by a \textbf{small subset of training players}~\cite{yeh2018representer,mirzasoleiman2020coresets,xu2018representation}. This phenomenon, referred to as \textbf{model-induced locality}~\cite{yang2026local}, implies that Shapley values are inherently sparse and structured, and therefore do not require global recomputation under updates. In contrast, recomputing Shapley values directly would revisit all players and tasks, with each entry defined by averaging marginal contributions over exponentially many coalitions~\cite{shapley1953value,ghorbani2019data}.

Following Yang et al.~\cite{yang2026local}, we formalize this property through a \textbf{local support set}. For each task $t$, let $\mathcal{N}(t)\subseteq\mathcal{D}$ denote the subset of players that influence its utility and thus may receive nonzero Shapley value.\footnote{If the utility is only approximately determined by $\mathcal{N}(t)$ with error at most $\epsilon$, the induced Shapley error is bounded by $2\epsilon$; see Lemma~\ref{lem:locality} in Appendix~\ref{proof:locality}.} This notion captures how the model uses training data for a given task and naturally varies across model classes. For example, in KNN, $\mathcal{N}(t)$ consists of nearest neighbors~\cite{mairal2008supervised,jia2019efficient}; in decision trees, training players in the same leaf~\cite{Lundberg2020TreeSHAP}; in kernel methods, players with high similarity~\cite{cortes1995support}; and in graph neural networks, nodes within a local neighborhood~\cite{kipf2017gcn,hamilton2017inductive}.

The existence of small support sets has two important consequences for maintaining the Shapley matrix. First, tasks with similar supports tend to have similar contribution patterns, enabling efficient estimation of new columns. Second, a new player can affect only the tasks whose support sets change, restricting updates to a small subset of entries. These effects give rise to \textbf{task-side locality} and \textbf{player-side locality}, which form the foundation of our incremental update methods.

Together, model-induced locality reduces dynamic Shapley computation from a global problem to a \textbf{localized update problem}, where only a small portion of the matrix needs to be recomputed.
\subsection{Task-Incremental Valuation via Utility Locality}
\label{sec:test}

We first address Problem~\ref{prob:test_incremental}. Given a Shapley matrix $\boldsymbol{\Phi}$, a new task $t' = t_{m+1}$ corresponds to adding a new column $\boldsymbol{\phi}(t')$. The goal is to estimate this column efficiently without recomputing Shapley values from scratch. The key challenge is to identify existing tasks that are informative for estimating $\boldsymbol{\phi}(t')$. This is enabled by \textbf{utility locality}: tasks with similar local dependencies induce similar utility functions and therefore have similar Shapley columns.

A natural baseline is to use feature-space proximity to find similar tasks~\cite{jethani2021fastshap,sun2026fast}. However, feature similarity does not capture how the model uses training data. For example, in a graph neural network, two nodes with similar attributes but different neighborhoods induce different local computation structures, leading to very different Shapley values~\cite{hamilton2017inductive,yuan2021explainability,duval2021graphsvx}. As a result, feature-based similarity can produce misleading estimates.

To address this, we compare tasks based on their \textbf{local computation structure}. Let $\Gamma(t)$ denote the local computation structure of task $t$, which captures both the supporting players $\mathcal{N}(t)$ and how their contributions are combined by the model (e.g., through weights, aggregation, or connectivity). This structure reflects how the model computes the utility of $t$ from its relevant players. Examples include decision paths in trees, message-passing neighborhoods in GNNs, weighted neighbor profiles in KNN, and kernel-relevance patterns in kernel models. 

We define a model-induced distance
\begin{equation}
d_\Gamma(t,t') = d\big(\Gamma(t), \Gamma(t')\big),
\end{equation}
where $d_\Gamma$ is instantiated according to the model (e.g., path overlap, graph distance, or support overlap); see Appendix~\ref{apx:model_locality} for concrete instantiations across different model families. A small value of $d_\Gamma(t,t')$ indicates that $t$ and $t'$ share similar computation structures and therefore exhibit \textbf{utility locality}.

To connect this notion of similarity—captured by the model-induced distance $d_\Gamma$ between local computation structures—to Shapley values, we impose a stability condition on the evaluation function.

\begin{assumption}[Model-induced Lipschitz continuity]
\label{asm:lipschitz}
For every coalition $S \subseteq \mathcal{D}$, the trained model $\theta(S)$ lies in a hypothesis class $\Theta$. Moreover, for any fixed model $\theta \in \Theta$, the evaluation function $g(\theta, t)$ varies smoothly across tasks with respect to the model-induced distance $d_\Gamma$, i.e.,
\[
|g(\theta, t) - g(\theta, t')|
\le
L_\Gamma d_\Gamma(t, t')
\quad \text{for all } t, t'.
\]

\end{assumption}

Assumption~\ref{asm:lipschitz} ensures that tasks with similar computation structures induce similar utility values under the same model. This condition holds for regularized ERM models and more broadly for models with bounded parameters and Lipschitz losses~\cite{bousquet2002stability,shalev2010learnability}; see Appendix~\ref{app:erm_utility_locality} for a derivation showing how these conditions imply the Lipschitz bound.

Under this condition, Shapley columns inherit the same smoothness~\cite{jia2019towards,ghorbani2020distributional}.

\begin{proposition}[Utility locality]
\label{prop:test}
Under Assumption~\ref{asm:lipschitz}, for any two test points $t$ and $t'$,
\begingroup\small
\[
\|\boldsymbol{\phi}(t)-\boldsymbol{\phi}(t')\|_\infty
\le
2L_\Gamma d_\Gamma(t,t').
\]
\endgroup

\end{proposition}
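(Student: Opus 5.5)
The plan is to fix an arbitrary player $z\in\mathcal{D}$, bound $|\phi(z,t)-\phi(z,t')|$ uniformly in $z$, and then take the maximum over $z$ to get the $\ell_\infty$ bound. The argument rests on two facts: linearity of the Shapley value in the utility function, and the fact that both tasks are evaluated on the \emph{same} trained models $\theta(\mathcal{S})$, since the coalition determines the model independently of the task.

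\textbf{Step 1: write the difference as the Shapley value of a difference game.} By \eqref{eq:value}, the map from utility to Shapley value is linear, so
\[
\phi(z,t)-\phi(z,t')=\frac{1}{n}\sum_{\mathcal{S}\subseteq\mathcal{D}\setminus\{z\}}\binom{n-1}{|\mathcal{S}|}^{-1}\Big(w(\mathcal{S}\cup\{z\})-w(\mathcal{S})\Big),
\]
where $w(\mathcal{S}):=v_t(\mathcal{S})-v_{t'}(\mathcal{S})=g(\theta(\mathcal{S}),t)-g(\theta(\mathcal{S}),t')$.

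\textbf{Step 2: bound the difference game.} For every coalition $\mathcal{S}$, the model $\theta(\mathcal{S})$ lies in $\Theta$. Applying Assumption~\ref{asm:lipschitz} with $\theta=\theta(\mathcal{S})$ gives $|w(\mathcal{S})|\le L_\Gamma d_\Gamma(t,t')$. This covers the empty coalition as well, as long as $\theta(\emptyset)\in\Theta$, which the assumption asserts for every $\mathcal{S}\subseteq\mathcal{D}$. By the triangle inequality, each marginal term then satisfies $|w(\mathcal{S}\cup\{z\})-w(\mathcal{S})|\le 2L_\Gamma d_\Gamma(t,t')$.

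\textbf{Step 3: sum the weights.} The Shapley weights form a probability distribution:
\[
\frac1n\sum_{\mathcal{S}\subseteq\mathcal{D}\setminus\{z\}}\binom{n-1}{|\mathcal{S}|}^{-1}=\frac1n\sum_{k=0}^{n-1}\binom{n-1}{k}\binom{n-1}{k}^{-1}=1.
\]
The difference is therefore a convex combination of quantities each bounded by $2L_\Gamma d_\Gamma(t,t')$, which yields $|\phi(z,t)-\phi(z,t')|\le 2L_\Gamma d_\Gamma(t,t')$. Taking the supremum over $z$ completes the proof.

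The only delicate point is Step 2. There we must make sure the Lipschitz condition is applied with the model held fixed across the two tasks, and this is legitimate because $\theta(\mathcal{S})$ depends only on $\mathcal{S}$. The factor $2$ comes from splitting the marginal contribution into two evaluations. A sharper constant is not needed.
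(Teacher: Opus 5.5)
Your proof is correct and takes essentially the same route as the paper. You bound $|v_t(\mathcal{S})-v_{t'}(\mathcal{S})|\le L_\Gamma d_\Gamma(t,t')$ for each coalition using the common model $\theta(\mathcal{S})\in\Theta$, apply the triangle inequality to each marginal term to get $2L_\Gamma d_\Gamma(t,t')$, use that the Shapley weights sum to one, and then maximize over players; your difference-game $w$ only repackages the paper's term-by-term triangle inequality.
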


Proposition~\ref{prop:test} shows that tasks that are close under $d_\Gamma$ have similar Shapley columns. This directly enables \textbf{column interpolation}: we estimate $\boldsymbol{\phi}(t')$ by combining nearby columns,
\begin{equation}
\label{eqn:task_update}
\widehat{\boldsymbol{\phi}}(t')
=
\sum_{t\in \mathcal{A}(t')}
w(t',t)\boldsymbol{\phi}(t),
\end{equation}
where $\mathcal{A}(t')$ contains tasks nearest to $t'$ under $d_\Gamma$, and the weights are convex.

We formalize the accuracy of this estimator as follows.

\begin{theorem}[Interpolation error]
\label{thm:test}
Under Assumption~\ref{asm:lipschitz}, if every reference point $t\in \mathcal{A}(t')$ satisfies $d_\Gamma(t',t) \leq \varepsilon$, then
\begingroup\small
\[
\big\|
\widehat{\boldsymbol{\phi}}(t')
-
\boldsymbol{\phi}(t')
\big\|_\infty
\leq
2L_\Gamma  \varepsilon.
\]
\endgroup

\end{theorem}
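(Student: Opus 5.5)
The plan is to reduce the interpolation error to the pairwise bound of Proposition~\ref{prop:test}, using convexity of the weights in \eqref{eqn:task_update}.

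\textbf{Step 1 (rewriting the error).} Since the weights $w(t',t)$ are nonnegative and sum to one over $\mathcal{A}(t')$, write
\[
\widehat{\boldsymbol{\phi}}(t')-\boldsymbol{\phi}(t')
=\sum_{t\in\mathcal{A}(t')} w(t',t)\bigl(\boldsymbol{\phi}(t)-\boldsymbol{\phi}(t')\bigr).
\]

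\textbf{Step 2 (triangle inequality).} Apply the triangle inequality for $\|\cdot\|_\infty$, then homogeneity with $w(t',t)\ge 0$. This gives
\[
\|\widehat{\boldsymbol{\phi}}(t')-\boldsymbol{\phi}(t')\|_\infty\le\sum_{t} w(t',t)\,\|\boldsymbol{\phi}(t)-\boldsymbol{\phi}(t')\|_\infty .
\]

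\textbf{Step 3 (invoking Proposition~\ref{prop:test}).} Bound each summand by $2L_\Gamma d_\Gamma(t,t')$. Then use the hypothesis $d_\Gamma(t',t)\le\varepsilon$; symmetry of $d_\Gamma$ is implicit in its definition as a distance. Summing the convex weights to one gives the bound $2L_\Gamma\varepsilon$.

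\textbf{The main obstacle} is not in this chain but in the fact that Proposition~\ref{prop:test} is taken as given. To make the argument self-contained, I would re-derive it as follows.
\begin{itemize}
\item For a fixed coalition $S$, Assumption~\ref{asm:lipschitz} applied to $\theta(S)$ gives $|v_t(S)-v_{t'}(S)|\le L_\Gamma d_\Gamma(t,t')$.
\item Each marginal difference $[v_t(S\cup\{z\})-v_t(S)]-[v_{t'}(S\cup\{z\})-v_{t'}(S)]$ is therefore at most $2L_\Gamma d_\Gamma$ in absolute value.
\item The Shapley weights in \eqref{eq:value} are a probability distribution over $S$, since $\frac1n\sum_{k}\binom{n-1}{k}/\binom{n-1}{k}=1$. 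The average of these differences is thus bounded by the same quantity, uniformly in $z$.
\end{itemize}

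The only subtlety to check is that the Lipschitz assumption holds uniformly over all trained models $\theta(S)\in\Theta$, including $\theta(\emptyset)$. Assumption~\ref{asm:lipschitz} states exactly this.
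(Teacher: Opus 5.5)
Your proposal is correct and follows the paper's proof essentially step for step: write the error as a convex combination of column differences, apply the triangle inequality, bound each term by Proposition~\ref{prop:test} together with the hypothesis $d_\Gamma(t',t)\le\varepsilon$, and use that the weights sum to one. Your re-derivation of Proposition~\ref{prop:test} (a coalition-wise Lipschitz bound, a $2L_\Gamma d_\Gamma$ bound on each marginal difference, and Shapley weights that sum to one) also matches the paper's own proof of that proposition.
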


Theorem~\ref{thm:test} shows that the estimation error is controlled by the distance between $t'$ and its neighbors under $d_\Gamma$. This yields an efficient update rule: \textbf{new columns are inferred from nearby columns rather than recomputed}, reducing exponential computation to a local interpolation step.

The same procedure applies independently to each task in $\Delta\mathcal{T}$, producing $\boldsymbol{\Phi}^{\mathrm{new}}_{:,\,m+1:m+\Delta m}$.
\subsection{Player-Incremental Valuation via Coalition Locality}
\label{sec:player}

We next address Problem~\ref{prob:player_incremental}. Given a Shapley matrix $\boldsymbol{\Phi}$, the arrival of a new player $z' = z_{n+1}$ expands the matrix by adding a new row $\boldsymbol{\Phi}^{\mathrm{new}}_{n+1,:}$ and may also change existing entries $\boldsymbol{\Phi}^{\mathrm{upd}}_{1:n,:}$. A naive approach would recompute all entries, which is prohibitively expensive.

The key observation is that these updates are inherently \textbf{localized}. For each task $t$, its utility depends only on a subset of players $\mathcal{N}(t)$, which we refer to as its \textbf{local dependency structure}. This induces a \textbf{local cooperative game} restricted to players in $\mathcal{N}(t)$, which fully determines the Shapley values for task $t$. When a new player $z'$ is added, only the tasks whose local dependency structures change need to be updated.

Formally, let $\mathcal{N}(t)$ and $\mathcal{N}^{+}(t)$ denote the support sets of task $t$ before and after inserting $z'$. The set of \textbf{affected tasks} is
\begin{equation}
\label{eq:new_player_affected_set}
\mathcal{R}(z') = \{t \in \mathcal{T} : \mathcal{N}^{+}(t) \neq \mathcal{N}(t)\},
\end{equation}
i.e., the tasks whose local dependency structures, and hence local games, change.

Let $\phi^{+}(z,t)$ and $\phi(z,t)$ denote the Shapley values under the updated and original games, respectively. The following result formalizes the coalition locality.

\begin{proposition}[Coalition locality]
\label{prop:player}
If the model evaluation at $t$ depends only on its local support, then for any $t \notin \mathcal R(z')$, the valuation at $t$ remains unchanged, i.e., $\phi^{+}(z_j,t)=\phi(z_j,t)$ for all $z_j\in\mathcal D$.

\end{proposition}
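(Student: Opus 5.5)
The plan is to formalize ``the model evaluation at $t$ depends only on its local support'' as a \emph{restriction property} of the utility, and then reduce both Shapley values to the same local game. Concretely, I would take the hypothesis to mean that for every coalition $\mathcal{S}$ of the current player set, $v_t(\mathcal{S}) = v_t(\mathcal{S}\cap\mathcal{N}(t))$. After insertion, with player set $\mathcal{D}^+=\mathcal{D}\cup\{z'\}$, the analogous property reads $v_t(\mathcal{S}) = v_t(\mathcal{S}\cap\mathcal{N}^+(t))$ for all $\mathcal{S}\subseteq\mathcal{D}^+$. In both games $v_t$ is the same function $g(\theta(\cdot),t)$; only the ground set differs.

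The first step uses $t\notin\mathcal{R}(z')$, which gives $\mathcal{N}^+(t)=\mathcal{N}(t)=:\mathcal{N}$. In particular $z'\notin\mathcal{N}$, assuming $z'$ is not already in the old support set, which is automatic since $\mathcal{N}(t)\subseteq\mathcal{D}$. Hence $z'$ is a \emph{null player} in the expanded game: for every $\mathcal{S}\subseteq\mathcal{D}$,
\[
v_t(\mathcal{S}\cup\{z'\})=v_t\big((\mathcal{S}\cup\{z'\})\cap\mathcal{N}\big)=v_t(\mathcal{S}\cap\mathcal{N})=v_t(\mathcal{S}).
\]
The second step invokes the standard fact that adding a null player leaves the Shapley values of the other players unchanged. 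This follows from the permutation form of \eqref{eq:value}. For any ordering $\pi$ of $\mathcal{D}^+$, the set of predecessors of $z_j$ differs from its predecessors in the induced ordering of $\mathcal{D}$ by at most $z'$. Since $z'$ is null, the marginal contribution of $z_j$ is identical in the two orderings. Each ordering of $\mathcal{D}$ arises from exactly $n+1$ orderings of $\mathcal{D}^+$, so averaging over them gives $\phi^+(z_j,t)=\phi(z_j,t)$.

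Alternatively, one could use the subset form directly. Splitting each coalition $\mathcal{S}\subseteq\mathcal{D}^+\setminus\{z_j\}$ by whether it contains $z'$ pairs $\mathcal{S}$ with $\mathcal{S}\cup\{z'\}$, and the identity $\frac{1}{n+1}\big[\binom{n}{s}^{-1}+\binom{n}{s+1}^{-1}\big]=\frac{1}{n}\binom{n-1}{s}^{-1}$ recovers \eqref{eq:value}. The same conclusion also covers $z_j\notin\mathcal{N}$: such a player is null in both games and has value $0$ in each.

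The main obstacle is not the combinatorics but the interpretation of the hypothesis. ``Depends only on its local support'' must be read as the restriction property holding \emph{both before and after} the insertion, with respect to the respective support sets. The paper treats $\mathcal{N}(t)$ as model-induced, for example as nearest neighbors, so I would state explicitly that $\mathcal{N}^+(t)$ is defined so that the post-insertion utility satisfies the restriction property. With that definition, the equality $\mathcal{N}^+(t)=\mathcal{N}(t)$ is exactly what makes $z'$ null, and the rest of the argument is routine.
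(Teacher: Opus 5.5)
Your proposal is correct and follows the paper's proof: both use $\mathcal{N}^+(t)=\mathcal{N}(t)\subseteq\mathcal{D}$ to make $z'$ a null player, then show that adding a null player leaves the other players' values unchanged. Your subset-form variant, which pairs $\mathcal{S}$ with $\mathcal{S}\cup\{z'\}$ via the binomial identity, is exactly the paper's computation, and your permutation-counting version is an equivalent way to do that step. Your explicit requirement that the restriction property hold both before and after insertion spells out what the paper uses implicitly when it writes $v_t^{+}(A)=v_t(A\cap\mathcal{D})$.
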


Proposition~\ref{prop:player} shows that only tasks in $\mathcal{R}(z')$ require updates. Moreover, within each affected task, only players in $\mathcal{N}(t)\cup\mathcal{N}^{+}(t)$ can change. All other entries remain unchanged and are directly reused. Thus, updates are restricted to a \textbf{small local block} of the matrix.

We summarize the update rule as follows.

\begin{theorem}[Local player update]
\label{thm:player}
Under coalition locality, for any $t \in \mathcal R(z')$, the updated Shapley values are determined by the new local game restricted to $\mathcal{N}^{+}(t)$:
\begingroup\small
\[
\phi^{+}(z,t)
=
\begin{cases}
\displaystyle
\frac{1}{|\mathcal{N}^{+}(t)|}
\sum_{S \subseteq \mathcal{N}^{+}(t)\setminus\{z\}}
\frac{v_t(S \cup \{z\}) - v_t(S)}
{\binom{|\mathcal{N}^{+}(t)|-1}{|S|}},
& z \in \mathcal{N}^{+}(t), \\[12pt]
0,
& z \in \mathcal{N}(t)\setminus \mathcal{N}^{+}(t), \\[6pt]
\phi(z,t),
& z \notin \mathcal{N}(t)\cup\mathcal{N}^{+}(t).
\end{cases}
\]
\endgroup
\end{theorem}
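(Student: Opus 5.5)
The plan is to reduce the statement to the standard fact that the Shapley value of a game whose utility depends only on a subset of players equals the Shapley value of the game restricted to that subset, with zero value for everyone outside it (the null-player and dummy-reduction properties). Throughout I read ``coalition locality'' as in Proposition~\ref{prop:player}: for the updated game on $\mathcal{D}^{+}=\mathcal{D}\cup\{z'\}$ and task $t$, we have $v_t(S)=v_t(S\cap\mathcal{N}^{+}(t))$ for all $S\subseteq\mathcal{D}^{+}$. The analogous statement holds before insertion with $\mathcal{N}(t)$ and $\mathcal{D}$. The three cases of the theorem then correspond to the supported players, the players that lost support, and the players outside both supports.

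First I would establish the restriction lemma. Let $N=\mathcal{N}^{+}(t)$, $k=|N|$ and $M=|\mathcal{D}^{+}|$. For $z\in N$, I write each coalition $S\subseteq\mathcal{D}^{+}\setminus\{z\}$ as $S=A\cup B$ with $A\subseteq N\setminus\{z\}$ and $B\subseteq\mathcal{D}^{+}\setminus N$. Locality makes the marginal contribution $v_t(S\cup\{z\})-v_t(S)=v_t(A\cup\{z\})-v_t(A)$ independent of $B$. So the full Shapley sum in Eq.~\eqref{eq:value} collapses to a sum over $A$, with aggregated weight
\[
\sum_{b=0}^{M-k}\binom{M-k}{b}\frac{1}{M}\binom{M-1}{|A|+b}^{-1}.
\]
I need to show this weight equals $\frac{1}{k}\binom{k-1}{|A|}^{-1}$. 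This is the one genuinely computational step, and I expect it to be the main obstacle. My first attempt would avoid the binomial identity altogether and use the permutation form of the Shapley value: $z$'s marginal contribution depends only on which members of $N$ precede $z$ in a uniformly random ordering of $\mathcal{D}^{+}$. The relative order induced on $N$ is uniform over $k!$ orderings, which yields exactly the restricted-game Shapley formula. As a fallback, the identity can be checked via the Beta-integral representation $\frac{1}{M}\binom{M-1}{s}^{-1}=\int_0^1 p^{s}(1-p)^{M-1-s}\,dp$. Summing over $b$ with the binomial theorem then gives $\int_0^1 p^{|A|}(1-p)^{k-1-|A|}\,dp=\frac{1}{k}\binom{k-1}{|A|}^{-1}$.

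For $z\notin N$, locality gives $v_t(S\cup\{z\})=v_t(S)$ for every $S$, so $z$ is a null player and $\phi^{+}(z,t)=0$. This covers the case $z\in\mathcal{N}(t)\setminus\mathcal{N}^{+}(t)$. For $z\notin\mathcal{N}(t)\cup\mathcal{N}^{+}(t)$, the same null-player argument applied to both the old and the new game gives $\phi(z,t)=0=\phi^{+}(z,t)$, so the stated value $\phi(z,t)$ is correct; it is written that way to emphasize that the stored entry is reused. The new player $z'$ falls in the first case if $z'\in N$ and otherwise receives $0$, consistent with the third case read with $\phi(z',t):=0$. Finally, I would remark that for $t\notin\mathcal{R}(z')$ the formula reproduces Proposition~\ref{prop:player}, since $N=\mathcal{N}(t)$ and the local game is unchanged.
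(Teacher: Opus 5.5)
Your proposal is correct and uses the same three-case decomposition as the paper's proof: the game restricted to $\mathcal{N}^{+}(t)$ for supported players, null players for players that lost support, and reuse otherwise. It is more complete than the paper's proof, which only asserts that the updated game ``can be restricted'' to $\mathcal{N}^{+}(t)$ and justifies the third case by saying that inserting $z'$ does not change the contribution; you prove the restriction identity (via random orderings or the Beta-integral computation), show that $\phi(z,t)=\phi^{+}(z,t)=0$ in the third case, and also handle $z'$ itself explicitly.
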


The theorem partitions players into three groups: (i) players in $\mathcal{N}^{+}(t)$, which are evaluated under the updated local game; (ii) players removed from the support, whose contributions become zero; and (iii) unaffected players, whose values remain unchanged. This decomposition makes the update entirely local.

A particularly efficient case arises when the support only expands.

\begin{corollary}[Monotone support expansion]
\label{cor:monotone}
For any $t \in \mathcal{R}(z')$, if $\mathcal{N}^{+}(t) = \mathcal{N}(t) \cup \{z'\}$, then the local update can reuse the original Shapley matrix. Specifically, $z'$ is valued on $\mathcal{N}^{+}(t)$ using coalitions from $\mathcal{N}(t)$, since $\mathcal{N}^{+}(t)\setminus\{z'\}=\mathcal{N}(t)$. Each existing player $z_j \in \mathcal{N}(t)$ is updated by
\begingroup\small
\[
\phi^{+}(z_j, t)
=
\phi(z_j, t) + \Delta_{j,t}(z').
\]
\endgroup
Here, $\Delta_{j,t}(z')$ captures the change in the marginal contribution of $z_j$ caused by inserting $z'$ on $t$.
\end{corollary}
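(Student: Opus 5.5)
The corollary is a specialization of Theorem~\ref{thm:player} to the case $\mathcal{N}^{+}(t)=\mathcal{N}(t)\cup\{z'\}$, with $k=|\mathcal{N}(t)|$. For the new player $z'$, the first case of Theorem~\ref{thm:player} gives
\[
\phi^{+}(z',t)=\frac{1}{k+1}\sum_{S\subseteq\mathcal{N}^{+}(t)\setminus\{z'\}}\frac{v_t(S\cup\{z'\})-v_t(S)}{\binom{k}{|S|}}.
\]
Since $\mathcal{N}^{+}(t)\setminus\{z'\}=\mathcal{N}(t)$, every coalition $S$ in this sum is a subset of the old support. Under locality, the utilities $v_t(S)$ were already evaluated when the original column $\boldsymbol{\phi}(t)$ was built. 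So only the augmented coalitions $S\cup\{z'\}$ require new evaluations, and the $v_t(S)$ terms can be reused. This establishes the first claim.

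For an existing player $z_j\in\mathcal{N}(t)$, I split the sum over $T\subseteq\mathcal{N}^{+}(t)\setminus\{z_j\}$ into two parts: coalitions $T=S$ with $z'\notin T$, and coalitions $T=S\cup\{z'\}$, where in both cases $S\subseteq\mathcal{N}(t)\setminus\{z_j\}$. This gives
\[
\phi^{+}(z_j,t)=\sum_{S}\Big[w_{k+1}(|S|)\,\delta_j(S)+w_{k+1}(|S|+1)\,\delta_j(S\cup\{z'\})\Big],
\]
where $\delta_j(T)=v_t(T\cup\{z_j\})-v_t(T)$ and $w_N(s)=\frac{1}{N\binom{N-1}{s}}$. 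By Theorem~\ref{thm:player} applied to the old game, or equivalently Proposition~\ref{prop:player} with the old support, the original value is $\phi(z_j,t)=\sum_S w_k(|S|)\,\delta_j(S)$.

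I would then define $\Delta_{j,t}(z')=\phi^{+}(z_j,t)-\phi(z_j,t)$ and write it out explicitly. Using the identity $w_{k+1}(s)+w_{k+1}(s+1)=w_k(s)$, the difference can be put in the form
\[
\Delta_{j,t}(z')=\sum_{S\subseteq\mathcal{N}(t)\setminus\{z_j\}}w_{k+1}(|S|+1)\Big[\delta_j(S\cup\{z'\})-\delta_j(S)\Big].
\]
This expresses the update as a weighted average, over old coalitions, of how inserting $z'$ changes $z_j$'s marginal contribution. It is exactly the interpretation stated in the corollary, and it is computed from the same new evaluations $v_t(S\cup\{z'\})$ and $v_t(S\cup\{z_j,z'\})$ together with the reused old utilities.

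Most of the work is in verifying the weight identity $w_{k+1}(s)+w_{k+1}(s+1)=w_k(s)$. This is the standard recursion $\frac{s!(k-s)!}{(k+1)!}+\frac{(s+1)!(k-s-1)!}{(k+1)!}=\frac{s!(k-s-1)!}{k!}$, which I would check directly. Its role is to cancel the $w_k$ terms when forming the difference.

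Two remaining points must be confirmed: that players outside $\mathcal{N}(t)\cup\{z'\}$ are unchanged, which follows from the third case of Theorem~\ref{thm:player}; and that no existing player leaves the support in this case, so the zero case of Theorem~\ref{thm:player} never applies.
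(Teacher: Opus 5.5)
Your proposal is correct and follows the paper's proof essentially step for step: the paper also values $z'$ over subsets of $\mathcal{N}(t)$, splits the coalitions excluding $z_j$ by whether they contain $z'$, and uses the same weight identity $w_{k+1}(s)+w_{k+1}(s+1)=w_k(s)$ to reach exactly your expression for $\Delta_{j,t}(z')$. One small quibble: Proposition~\ref{prop:player} is about unaffected tasks and does not give the old local-game formula for $\phi(z_j,t)$; the paper simply takes that formula as the Shapley value of the old local game, which is all your argument needs.
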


Corollary~\ref{cor:monotone} shows that when $\mathcal{N}^{+}(t) = \mathcal{N}(t) \cup \{z'\}$, the update can \textbf{reuse existing Shapley values}. The new row is computed from the marginal effect of $z'$, while existing players receive an additive correction that accounts for how $z'$ changes their contributions.\footnote{We assume that $\mathcal{N}(t)$ fully determines the utility of $t$; the approximate case is given in Theorem~\ref{thm:player_approx} in Appendix~\ref{proof:player_approx}.}

In this case, the number of new evaluations is bounded by
\begin{equation}
\sum_{t\in\mathcal{R}(z')}2^{|\mathcal{N}(t)|}
\le
|\mathcal{R}(z')|2^{K_{\max}},
\end{equation}
where $K_{\max}=\max_{t\in\mathcal{R}(z')}|\mathcal{N}(t)|$; see Corollary~\ref{cor:cost} in Appendix~\ref{proof:cost}.

Overall, inserting new players expands $\boldsymbol{\Phi}$ by new rows, while existing entries are updated only for affected tasks and local dependency structures. Since these local games are typically much smaller than the full player set~\cite{jia2019efficient,yang2026local}, the cost depends on local structure size rather than the entire matrix, turning global recomputation into efficient localized updates.

\vparagraph{General Dynamic Updates.}
D-Shap also supports deletion under the same matrix-maintenance principle. For task deletion, removing a task simply deletes its corresponding column from $\boldsymbol{\Phi}$. For player deletion, removing a player $z$ deletes its row from $\boldsymbol{\Phi}$ and updates only the tasks whose local support sets are affected. Let $\mathcal{N}^{-}(t)$ denote the support set after deleting $z$. If $\mathcal{N}^{-}(t)=\mathcal{N}(t)$, coalition locality implies that the remaining entries are unchanged. Otherwise, D-Shap recomputes only the local game induced by $\mathcal{N}^{-}(t)$ while reusing all other entries.

More broadly, simultaneous task and player updates are handled by composing task-side interpolation with player-side localized row/block updates. The new task columns are estimated using the interpolation guarantee in Theorem~\ref{thm:test}, while entries affected by newly added players are updated using the local-game principle in Theorem~\ref{thm:player}. Their interaction is confined to new task columns whose support sets include new players. Therefore, task/player addition, deletion, replacement, and joint updates are unified as localized Shapley-matrix maintenance; see Appendix~\ref{apx:dynamic}.
\subsection{Constructing the Shapley Matrix via Self-Valuation}
\label{sec:self_valuation}

The effectiveness of D-Shap relies on an initial \textbf{player-by-task Shapley matrix} that can support future updates. However, in dynamic settings, such a matrix is often unavailable: an external set of tasks may not exist at initialization time, or may be poorly aligned with future tasks, leading to inaccurate interpolation and degraded performance.

Therefore, we introduce \textbf{self-valuation}, a new mechanism that constructs the Shapley matrix \emph{directly from the training data itself}, without requiring any external tasks. The key idea is to treat each player as a \textbf{proxy task}, thereby turning the training set into a self-contained set of evaluation tasks.

Specifically, for each player $z_i \in \mathcal{D}$, we define a leave-one-out game with player set $\mathcal{D}_{-i} = \mathcal{D} \setminus \{z_i\}$ and treat $z_i$ as a task. This yields a \textbf{self-valuation Shapley matrix} $\boldsymbol{\Phi} \in \mathbb{R}^{n \times n}$ with entries $\phi(z_j, z_i)$ computed under $\mathcal{D}_{-i}$. Diagonal entries are undefined since $z_i$ is excluded. This construction avoids the degenerate case where a player explains itself, while producing a matrix that reflects intrinsic relationships within the data.

\paragraph{Shared Subset Scheduling.}
A naive construction would solve $n$ independent leave-one-out Shapley games, each requiring $\Theta(2^{n-1})$ model trainings, resulting in $\Theta(n \cdot 2^{n-1})$ total cost. This is prohibitively expensive.

To eliminate redundancy, we introduce \textbf{shared subset scheduling}, which reorganizes computation around coalition reuse. Instead of evaluating each coalition separately for each task, we assign every coalition $S \subseteq \mathcal{D}$ a unique \textbf{pivot} $z_{\pi(S)} \in \mathcal{D} \setminus S$. The model $\theta(S)$ is trained once at its pivot and its utility is reused across all tasks $z_i \notin S$, contributing to $\phi(\cdot, z_i)$ with appropriate weights. 

This design evaluates each coalition exactly once while preserving the exact Shapley estimator. Thus, self-valuation requires no more training runs than a single global Shapley computation, yet produces $n$ valuation columns. This transforms an otherwise prohibitive cost into a practical one.

\paragraph{Coverage-Aware Anchor Selection.}
Even with shared subset scheduling, constructing the full matrix can be expensive for large datasets. We therefore build a compact Shapley matrix using a small set of representative players, called \textbf{anchors}. Specifically, we select $\{a_1,\ldots,a_k\}\subseteq\mathcal{D}$ with $k\ll n$ and construct $\boldsymbol{\Phi}\in\mathbb{R}^{n\times k}$. Anchor quality is measured by the \textbf{covering radius}
\begin{equation}
\label{eq:covering_radius}
r_{\max} = \max_{z_i \in \mathcal{D}} \min_{a \in \{a_1,\ldots,a_k\}} d_\Gamma(z_i, a).
\end{equation}
A smaller radius improves coverage and makes interpolation more reliable. We minimize this radius using farthest-point sampling, which gives a standard $2$-approximation~\cite{gonzalez1985clustering}. During online updates, if a new task $t'$ is poorly covered, i.e., $\min_a d_\Gamma(t', a) > \tau$, we explicitly compute its Shapley column and add it as a new anchor, enabling \textbf{adaptive matrix expansion} to maintain accuracy over time.

Overall, self-valuation provides a \textbf{self-contained and scalable initialization} of the Shapley matrix. Combined with shared subset scheduling and adaptive anchor selection, it enables D-Shap to operate without external tasks while maintaining both efficiency and accuracy.
\section{Experiments}
\label{sec:experiments}

We evaluate \textbf{D-Shap} across five representative model families with distinct locality mechanisms and compare it against both static and dynamic baselines. Our goal is to examine whether D-Shap maintains high-quality Shapley values while avoiding expensive recomputation in dynamic settings. The evaluation is organized around four questions: \textbf{RQ1 (Task-incremental)} asks whether D-Shap can accurately and efficiently estimate Shapley values for new tasks; \textbf{RQ2 (Player-incremental)} asks whether D-Shap can efficiently update valuations when new players arrive; \textbf{RQ3 (Locality)} studies whether model-induced locality is necessary for reliable valuation updates; and \textbf{RQ4 (Scalability)} evaluates whether constructing the Shapley matrix is efficient and scalable in practice.

\subsection{Experimental Setup}
\label{sec:exp_setup}

\paragraph{Datasets and models.}
We evaluate D-Shap across five representative model–dataset pairs that exhibit diverse forms of model-induced locality, where each model defines locality through a different notion of supporting players and computation structure. We consider: (i) Weighted $K$-Nearest Neighbors (WKNN)~\cite{dudani1976distance} on MNIST~\cite{lecun1998gradient} (neighbor-based locality); (ii) Decision Trees (DT)~\cite{breiman1984classification} on Iris~\cite{fisher1936iris} (partition-based locality); (iii) RBF Kernel SVM~\cite{cortes1995support} on Breast Cancer~\cite{street1993nuclear} (kernel-relevance locality); (iv) Convolutional Neural Networks (CNN)~\cite{lecun1998gradient} on MNIST (representation-space locality); and (v) Graph Convolutional Networks (GCN)~\cite{kipf2017gcn} on Cora~\cite{mccallum2000automating} (graph neighborhood locality). These models span both classical and deep learning settings, highlighting the generality of our framework. Dataset statistics and hyperparameters are provided in Appendix~\ref{apx:exp:setup}.

\vparagraph{Baselines.}
We compare against both static and dynamic baselines. Static methods include \emph{Global-MC Recompute} (high-budget Monte Carlo reference), \emph{TMC-Shapley}~\cite{ghorbani2019data}, and \emph{Comple-S}~\cite{sun2024shapley}. For task-incremental settings, we include learning-based baselines \emph{Fast-DataShapley}~\cite{sun2026fast} and \emph{Amortized-S}~\cite{covert2024stochastic}, which predict Shapley scores without explicit update rules. For player-incremental settings, we compare with \emph{B-Delta}~\cite{xia2025computing}. We exclude \emph{DeltaShap}~\cite{zhang2023dynamic} as it does not finish within our compute budget. These baselines represent state-of-the-art approaches for static estimation, amortized prediction, and incremental updates. 

\vparagraph{Evaluation protocol.}
We construct the initial Shapley matrix using self-valuation, making the setup fully self-contained without external tasks. We then reserve a held-out pool to simulate dynamic updates. In the task-incremental setting, held-out points arrive as new tasks, while the player set remains fixed. In the player-incremental setting, they arrive as new players, while evaluation tasks remain fixed. The held-out pool is processed as a stream, with updates performed online after each arrival. We reserve 30\% of the data for Iris and Breast Cancer, and 1{,}000 points for MNIST and Cora.

\vparagraph{Metrics.}
We evaluate both accuracy and efficiency. Accuracy is measured by Spearman rank correlation $\rho$ (filtering $<10^{-3}$ entries dominated by MC noise) and Pearson correlation $r$ against a high-budget Global-MC reference. Efficiency is measured by wall-clock time $T$ per update~\cite{zhang2023dynamic}. This setup directly evaluates whether D-Shap achieves accurate valuation while avoiding recomputation.

\vparagraph{Implementation details.}
All Monte Carlo methods use a shared stopping criterion and are capped at 5{,}000 samples~\cite{ghorbani2019data}. Convergence is checked every 100 samples using relative change. Each experiment is repeated with five random seeds. Experiments are run on a single machine with 64 Intel Xeon E5-2640 v4 CPUs and 64\,GB RAM; CNN and GNN models additionally use four NVIDIA RTX A5000 GPUs.

%

\begin{table}[t]
    \centering
    \small
    \caption{Task-Incremental Valuation}
    \label{tabtest}
    \resizebox{\textwidth}{!}{%
    \begin{tabular}{lccccccccccccccc}
        \toprule
        \multirow{2}{*}{Method}
        & \multicolumn{3}{c}{WKNN/MNIST}
        & \multicolumn{3}{c}{DT/Iris}
        & \multicolumn{3}{c}{SVM/BC}
        & \multicolumn{3}{c}{CNN/MNIST}
        & \multicolumn{3}{c}{GNN/Cora} \\
        \cmidrule(lr){2-4}\cmidrule(lr){5-7}\cmidrule(lr){8-10}\cmidrule(lr){11-13}\cmidrule(lr){14-16}
        & $\rho$ & $r$ & $T$
        & $\rho$ & $r$ & $T$
        & $\rho$ & $r$ & $T$
        & $\rho$ & $r$ & $T$
        & $\rho$ & $r$ & $T$ \\
        \midrule
        Global-MC$^\dagger$
        & 1.000 & 1.000 & 2.0e4
        & 1.000 & 1.000 & 2.1e3
        & 1.000 & 1.000 & 1.2e4
        & 1.000 & 1.000 & 2.1e5
        & 1.000 & 1.000 & 1.2e5 \\
        TMC-Shapley
        & 0.881 & 0.858 & 6.9e3
        & 0.805 & 0.906 & 1.6e3
        & 0.776 & 0.850 & 5.6e3
        & 0.678 & 0.921 & 9.6e4
        & 0.788 & 0.947 & 1.3e5 \\
        Comple-S
        & 0.502 & 0.750 & 2.6e3
        & 0.821 & 0.928 & 1.7e2
        & 0.648 & 0.782 & 3.7e3
        & 0.708 & 0.937 & 5.4e4
        & 0.804 & 0.952 & 8.6e4 \\
        \midrule
        Amortized-S
        & 0.114 & 0.106 & 2.2e-2
        & 0.594 & 0.647 & 3.1e-2
        & 0.612 & 0.569 & 3.5e-2
        & 0.376 & 0.809 & 6.8e-2
        & -0.002 & 0.061 & 8.2e-2 \\
        Fast-DataShapley
        & 0.291 & 0.130 & 9.6e-1
        & 0.523 & 0.481 & 4.4e-2
        & 0.477 & 0.246 & 2.7e-2
        & 0.130 & 0.126 & 5.1e-2
        & 0.030 & 0.116 & 7.1e-2 \\
        D-Shap
        & \textbf{0.908} & \textbf{0.819} & \textbf{4.1e-3}
        & \textbf{0.801} & \textbf{0.847} & \textbf{3.4e-4}
        & \textbf{0.870} & \textbf{0.922} & \textbf{3.6e-4}
        & \textbf{0.752} & \textbf{0.905} & \textbf{3.1e-4}
        & \textbf{0.532} & \textbf{0.630} & \textbf{1.2e-3} \\
        \bottomrule
    \end{tabular}}
\end{table}
\begin{table}[t]
    \centering
    \small
    \caption{Player-Incremental Valuation}
    \label{tabplayer}
    \resizebox{\textwidth}{!}{%
    \begin{tabular}{lccccccccccccccc}
        \toprule
        \multirow{2}{*}{Method}
        & \multicolumn{3}{c}{WKNN/MNIST}
        & \multicolumn{3}{c}{DT/Iris}
        & \multicolumn{3}{c}{SVM/BC}
        & \multicolumn{3}{c}{CNN/MNIST}
        & \multicolumn{3}{c}{GNN/Cora} \\
        \cmidrule(lr){2-4}\cmidrule(lr){5-7}\cmidrule(lr){8-10}\cmidrule(lr){11-13}\cmidrule(lr){14-16}
        & $\rho$ & $r$ & $T$
        & $\rho$ & $r$ & $T$
        & $\rho$ & $r$ & $T$
        & $\rho$ & $r$ & $T$
        & $\rho$ & $r$ & $T$\\
        \midrule
        Global-MC$^\dagger$
        & 1.000 & 1.000 & 2.3e4
        & 1.000 & 1.000 & 1.3e3
        & 1.000 & 1.000 & 2.2e4
        & 1.000 & 1.000 & 3.9e5
        & 1.000 & 1.000 & 7.0e5 \\
        TMC-Shapley
        & 0.878 & 0.752 & 8.4e3
        & 0.786 & 0.833 & 9.1e2
        & 0.814 & 0.961 & 6.8e3
        & 0.426 & 0.687 & 1.3e5
        & 0.822 & 0.972 & 7.2e5 \\
        Comple-S
        & 0.425 & 0.710 & 4.5e3
        & 0.719 & 0.912 & 6.7e2
        & 0.498 & 0.879 & 4.2e3
        & 0.439 & 0.675 & 8.6e4
        & 0.629 & 0.837 & 4.6e5 \\
        \midrule
        B-Delta
        & 0.753 & 0.663 & 1.2e4
        & 0.733 & \textbf{0.813} & 1.1e3
        & \textbf{0.698} & 0.657 & 1.3e3
        & 0.318 & 0.280 & 2.1e5
        & 0.643 & \textbf{0.829} & 5.1e5 \\
        D-Shap 
        & \textbf{0.875} & \textbf{0.933} & \textbf{5.9e1}
        & \textbf{0.760} & 0.811 & \textbf{7.5e1}
        & 0.636 & \textbf{0.701} & \textbf{9.3e1}
        & \textbf{0.462} & \textbf{0.680} & \textbf{3.9e2}
        & \textbf{0.705} & 0.697 & \textbf{2.5e2} \\
        \bottomrule
    \end{tabular}}
\end{table}
\begin{table*}[t]
\centering
\small
\newlength{\subtabwA}
\newlength{\subtabwB}
\newlength{\subtabwC}
\setlength{\subtabwA}{0.38\textwidth}
\setlength{\subtabwB}{0.29\textwidth}
\setlength{\subtabwC}{0.28\textwidth}
\begin{minipage}[t]{\subtabwA}
    \centering
    \captionof{table}{Utility Locality}
    \label{tablocality}
    \resizebox{\linewidth}{!}{%
    \begin{tabular}{@{}lcccc@{}}
        \toprule
        Model & Feature $\rho$ & D-Shap $\rho$ & Feature $r$ & D-Shap $r$ \\
        \midrule
        WKNN & 0.685    & \textbf{0.908} & 0.621 & \textbf{0.819} \\
        DT   & 0.758    & \textbf{0.800} & 0.794 & \textbf{0.847} \\
        SVM  & 0.807    & \textbf{0.870} & 0.913 & \textbf{0.922} \\
        CNN  & 0.596    & \textbf{0.752} & 0.876 & \textbf{0.905} \\
        GNN  & $-0.003$ & \textbf{0.532} & 0.006 & \textbf{0.630} \\
        \bottomrule
    \end{tabular}
    }
\end{minipage}
\hspace{0.01\textwidth}
\begin{minipage}[t]{\subtabwB}
    \centering
    \captionof{table}{Coalition Locality}
    \label{tabsupport}
    \resizebox{\linewidth}{!}{%
    \begin{tabular}{@{}lllcc@{}}
        \toprule
        Dataset & Pred. & Support & $\rho$ & $r$ \\
        \midrule
        MNIST & WKNN & WKNN & \textbf{0.875} & \textbf{0.933} \\
        MNIST & WKNN & SVM  & 0.534          & 0.716          \\
        MNIST & WKNN & DT   & 0.658          & 0.390          \\
        \midrule
        Cora  & GNN  & GNN  & \textbf{0.705} & \textbf{0.697} \\
        Cora  & GNN  & WKNN & 0.308          & 0.491          \\
        Cora  & GNN  & DT   & 0.417          & 0.510          \\
        \bottomrule
    \end{tabular}
    }
\end{minipage}
\hspace{0.01\textwidth}
\begin{minipage}[t]{\subtabwC}
    \centering
    \captionof{table}{Matrix Efficiency}
    \label{tab:self_val_construction}
    \resizebox{\linewidth}{!}{%
    \begin{tabular}{@{}lccc@{}}
        \toprule
        Model & Naive & D-Shap & Speedup \\
        \midrule
        WKNN & 4102 s  & 149 s  & 27.6$\times$ \\
        DT   & 12724 s & 251 s  & 50.7$\times$ \\
        SVM  & 9165 s  & 262 s  & 35.0$\times$ \\
        CNN  & $>$100 hrs & 5410 s & $>$66.5$\times$ \\
        GNN  & $>$100 hrs & 3432 s & $>$104.9$\times$ \\
        \bottomrule
    \end{tabular}
    }
\end{minipage}
\end{table*}

\subsection{Task-Incremental Valuation (RQ1)}
\label{sec:exp_task}

Table~\ref{tabtest} evaluates D-Shap on task-incremental valuation. D-Shap achieves accuracy comparable to static baselines while reducing per-task cost by $10^6$--$10^8\times$, enabling real-time valuation. This is achieved by replacing recomputation with interpolation over existing columns under utility locality. Learning-based baselines assume feature similarity implies valuation similarity. However, Proposition~\ref{prop:test} shows that valuation similarity is governed by model-induced computation structure. When these diverge, learning-based methods fail. This is most evident in GNNs, where feature-based baselines collapse, while D-Shap reaches 0.532. Although performance on GNNs is weaker due to limitations of $d_\Gamma$, D-Shap consistently outperforms learning-based methods.

\subsection{Player-Incremental Valuation (RQ2)}
\label{sec:exp_player}

Table~\ref{tabplayer} evaluates D-Shap on player-incremental valuation. D-Shap maintains comparable accuracy while reducing update cost by $10^1$--$10^3\times$. This efficiency comes from exploiting coalition locality: D-Shap identifies the affected set $\mathcal{R}(z')$ and updates only the corresponding local games using Theorem~\ref{thm:player} and Corollary~\ref{cor:monotone}, while reusing all unaffected entries. In contrast, dynamic baselines still require repeated Monte Carlo updates, whose cost grows with dataset size, making them expensive for deep models. In general, D-Shap avoids global recomputation by converting player updates into localized matrix updates, achieving up to $1000\times$ speedup and maintaining competitive quality.

\subsection{Role of Model-Induced Locality (RQ3)}
\label{sec:exp_locality}
We evaluate the necessity of \textbf{model-induced locality} from two perspectives. First, Table~\ref{tablocality} analyzes utility locality by comparing the model-induced distance $d_\Gamma$ with $L_2$ feature-space proximity for task-incremental valuation. Across models, $d_\Gamma$ improves $\rho$ by $5.5\%$--$32.6\%$, with the largest gap on GNNs, where feature proximity fails ($\rho=-0.003$) while $d_\Gamma$ reaches $0.532$; even for WKNNs, $d_\Gamma$ improves $\rho$ from $0.685$ to $0.908$. Second, Table~\ref{tabsupport} studies coalition locality by testing whether the support-set definition $\mathcal{N}(t)$ aligns with the prediction model in player-incremental valuation. Matched support performs best, improving $\rho$ by $33.0\%$--$63.9\%$ on WKNNs and by $69.1\%$--$128.9\%$ on GNNs, while mismatched support captures only partial task-specific utility. The sensitivity to support-set size is provided in Appendix~\ref{apx:exp:locality}.

\subsection{Efficiency of Shapley Matrix Construction (RQ4)}
\label{sec:exp_construction}

Table~\ref{tab:self_val_construction} evaluates the cost of constructing the Shapley matrix $\boldsymbol{\Phi}$. D-Shap achieves over $25\times$ speedup across all settings, with even larger gains for deep models, where model training is more expensive and naive methods fail to finish within $100$ hours. This efficiency comes from shared subset scheduling, which removes redundant evaluations. 
Appendix~\ref{apx:exp:anchor} studies coverage-aware anchor selection, which further reduces matrix construction cost while preserving update quality. These results show that D-Shap makes Shapley matrix construction practical.

\section{Related Work}
\label{sec:related}
Shapley values originate from cooperative game theory~\cite{shapley1953value} and were introduced to ML data valuation by Data Shapley~\cite{ghorbani2019data}, where training instances are treated as players and utility is measured on a held-out validation set. They have since been used for feature attribution~\cite{lundberg2017unified}, data selection~\cite{ghorbani2022data}, model debugging~\cite{adebayo2020debugging}, and data marketplace pricing~\cite{liu2021dealer,agarwal2019marketplace}. To reduce estimation cost, prior work uses Monte Carlo coalition sampling~\cite{Castro2009Polynomial}, truncated coalition evaluation~\cite{ghorbani2019data}, complementary-contribution averaging~\cite{sun2024shapley}, and group
testing~\cite{jia2019towards}, or exploits model structure via
TreeSHAP~\cite{Lundberg2020TreeSHAP}, KNN-Shapley~\cite{jia2019efficient},
and Local Shapley~\cite{yang2026local}. Gradient- and influence-based
proxies~\cite{koh2017understanding,basu2020influence,park2023trak,wang2024data} trade exact Shapley semantics for tractable per-trajectory or per-update
attribution. All of the above assume a fixed player-task set; revaluation under change requires re-running the estimator.

As the player set evolves, dynamic Shapley methods~\cite{zhang2023dynamic,xia2025computing} update values for arriving contributors, but still require global recomputation and are limited to fixed tasks. For \emph{task} dynamics, amortized
explainers~\cite{jethani2021fastshap,sun2026fast,covert2024stochastic} train a model once and produce per-task Shapley values via a single forward pass, but require explainer retraining whenever a new player
arrives. In contrast, D-Shap unifies task- and player-incremental valuation by exploiting model-induced locality to update only affected entries in the player-by-task Shapley matrix. It provides principled update rules for both task-side and player-side changes (Propositions~\ref{prop:test} and~\ref{prop:player}) with error guarantees (Theorems~\ref{thm:test} and~\ref{thm:player_approx}), achieving substantial speedup while maintaining valuation quality comparable to full recomputation.

\section{Conclusion}

We introduced \textbf{D-Shap}, a new framework for dynamic Shapley computation that redefines data valuation as a structured matrix maintenance problem. By representing Shapley values as a player-by-task matrix and leveraging model-induced locality, D-Shap enables efficient updates through localized interpolation and restricted recomputation, avoiding costly global recalculation. We further proposed self-valuation, which eliminates the need for pre-specified evaluation tasks and makes the framework fully self-contained and scalable. These ideas transform Shapley computation from an expensive one-shot process into a reusable and incremental procedure, significantly improving practicality in dynamic machine learning systems.


\bibliographystyle{abbrv}
\bibliography{reference}

\clearpage
\appendix
\section{Notations}\label{apx:notation}
\begin{table}[h]
\centering
\small
\setlength{\tabcolsep}{5pt}
\renewcommand{\arraystretch}{1.15}
\caption{Summary of notation.}
\label{tab:notation}
\begin{tabular}{@{}ll@{}}
\toprule
Notation & Description \\
\midrule
$\mathcal{D}=\{z_i\}_{i=1}^{n}$ & Training players, also called players \\
$\mathcal{T}=\{t_j\}_{j=1}^{m}$ & Evaluation tasks \\
$z_i$ & The $i$-th training player or player \\
$t_j$ & The $j$-th task \\
$n$ & Number of training players \\
$m$ & Number of tasks \\
$S\subseteq\mathcal{D}$ & A coalition of players \\
$\theta(S)$ & Model trained on coalition $S$ \\
$g(\theta(S),t)$ & Evaluation function of model $\theta(S)$ on task $t$ \\
$v_t(S)$ & Task-specific utility, $v_t(S)=g(\theta(S),t)$ \\
$\phi(z,t)$ & Shapley value of player $z$ with respect to task $t$ \\
$\boldsymbol{\Phi}\in\mathbb{R}^{n\times m}$ & Player-by-task Shapley matrix \\
$\Phi_{i,j}$ & Entry of $\boldsymbol{\Phi}$, where $\Phi_{i,j}=\phi(z_i,t_j)$ \\
$\boldsymbol{\Phi}_{:,j}$ & Shapley column for task $t_j$ \\
$\boldsymbol{\Phi}_{i,:}$ & Shapley row for player $z_i$ \\
\midrule
$\Delta\mathcal{T}$ & Set of newly arriving tasks \\
$\Delta\mathcal{D}$ & Set of newly arriving players \\
$t'$ & A newly arriving task \\
$z'$ & A newly arriving player \\
$\boldsymbol{\Phi}^{\mathrm{new}}$ & Newly added rows or columns in the updated Shapley matrix \\
$\boldsymbol{\Phi}^{\mathrm{upd}}$ & Updated entries for existing players or tasks \\
\midrule
$\mathcal{N}(t)$ & Local support set of task $t$ before an update \\
$\mathcal{N}^{+}(t)$ & Local support set of task $t$ after inserting a new player \\
$R(z')$ & Affected task set after inserting $z'$ \\
$d_\Gamma(t,t')$ & Model-induced distance between tasks $t$ and $t'$ \\
$\Gamma(t)$ & Local computation structure of task $t$ \\
$L_\Gamma$ & Lipschitz constant with respect to $d_\Gamma$ \\
$\epsilon$ & Coverage radius or locality error tolerance \\
$\widehat{\phi}(t')$ & Interpolated Shapley column for a new task $t'$ \\
$\mathcal{A}(t')$ & Neighboring anchor tasks used to interpolate $\widehat{\phi}(t')$ \\
$w(t',t)$ & Interpolation weight assigned to task $t$ for estimating $t'$ \\
\midrule
$\mathcal{A}$ & Anchor set for self-valuation matrix construction \\
$k$ & Number of selected anchors \\
$k/n$ & Anchor ratio \\
$r_{\max}$ & Maximum covering radius of anchors under $d_\Gamma$ \\
$\tau$ & Threshold for adaptive anchor expansion \\
$K_{\max}$ & Maximum local support size among affected tasks \\
\midrule
$r$ & Pearson correlation with the Global-MC reference \\
$\rho$ & Spearman rank correlation with the Global-MC reference \\
$T$ & Wall-clock update or construction time \\
\bottomrule
\end{tabular}
\end{table}

\clearpage
\section{Detailed Proof}\label{apx:proof}

\subsection{Proof of Lemma~\ref{lem:locality}}
\label{proof:locality}
\begin{lemma}\label{lem:locality}
Let $\bar v_t(\mathcal{S})=v_t(\mathcal{S}\cap\mathcal{N}(t))$ be the localized game, and let $\eta_t = \sup_{\mathcal{S}\subseteq\mathcal{D}} |v_t(\mathcal{S}) - \bar{v}_t(\mathcal{S})|$ denote the localization error. If $\eta_t\le\epsilon$, then for every $z_j\in\mathcal{D}$, $|\phi(z_j,t)-\bar{\phi}(z_j,t)|\le 2\epsilon$, where $\bar{\phi}(z_j,t)$ is the Shapley value under $\bar v_t$.

\end{lemma}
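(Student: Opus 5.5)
The plan is to exploit the linearity of the Shapley value in the underlying game and the fact that the Shapley weights form a probability distribution over coalitions. Define the difference game $w_t(\mathcal{S}) = v_t(\mathcal{S}) - \bar v_t(\mathcal{S})$ for all $\mathcal{S}\subseteq\mathcal{D}$. Since Eq.~\eqref{eq:value} is linear in the utility function, we have $\phi(z_j,t)-\bar\phi(z_j,t) = \phi_{w_t}(z_j)$, the Shapley value of $z_j$ in the game $w_t$. It therefore suffices to bound $|\phi_{w_t}(z_j)|$ by $2\epsilon$ whenever $\sup_{\mathcal{S}}|w_t(\mathcal{S})|=\eta_t\le\epsilon$.

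Next, write out $\phi_{w_t}(z_j)$ using Eq.~\eqref{eq:value}: it is a weighted sum over $\mathcal{S}\subseteq\mathcal{D}\setminus\{z_j\}$ of the marginal differences $w_t(\mathcal{S}\cup\{z_j\})-w_t(\mathcal{S})$, with weights $p(\mathcal{S}) = \frac{1}{n}\binom{n-1}{|\mathcal{S}|}^{-1}$. I will verify that these weights are nonnegative and sum to one. Grouping by size $s=|\mathcal{S}|$, there are $\binom{n-1}{s}$ coalitions of each size, so size $s$ contributes total weight $1/n$, and summing over $s=0,\dots,n-1$ gives $1$. Each marginal difference is bounded by the triangle inequality: $|w_t(\mathcal{S}\cup\{z_j\})|+|w_t(\mathcal{S})|\le 2\eta_t\le 2\epsilon$. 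A convex combination of quantities bounded by $2\epsilon$ in absolute value is itself bounded by $2\epsilon$, which gives the claim.

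There is no real obstacle. The only points needing care are that the localized game $\bar v_t$ is defined on the same player set $\mathcal{D}$, so the same formula and weights apply to both games, and that the normalization of the weights is checked correctly. As an optional remark, one could note that for $z_j\notin\mathcal{N}(t)$ the player is a null player in $\bar v_t$, so $\bar\phi(z_j,t)=0$. This is not needed for the bound.
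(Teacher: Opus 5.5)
Your proposal is correct and follows essentially the same route as the paper. The paper subtracts the two Shapley formulas term by term, which is exactly your difference-game/linearity step. It then bounds each marginal difference by $2\epsilon$ via the triangle inequality and uses the fact that the weights $\frac{1}{n}\binom{n-1}{|\mathcal{S}|}^{-1}$ sum to one, verified by grouping coalitions by size.
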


\begin{proof}
By the definition of $\eta_t$, for every coalition $\mathcal{S}\subseteq\mathcal{D}$,
\begin{equation}
\left|v_t(\mathcal{S})-\bar v_t(\mathcal{S})\right|
=
\left|v_t(\mathcal{S})-v_t(\mathcal{S}\cap\mathcal{N}(t))\right|
\le \eta_t
\le \epsilon .
\end{equation}
Fix any training instance $z_j\in\mathcal{D}$. By the Shapley definition,
\begin{equation}
\begin{aligned}
\left|\phi(z_j,t)-\bar{\phi}(z_j,t)\right|
&=
\left|
\frac{1}{n}
\sum_{\mathcal{S}\subseteq\mathcal{D}\setminus\{z_j\}}
\frac{
\bigl[v_t(\mathcal{S}\cup\{z_j\})-v_t(\mathcal{S})\bigr]
-
\bigl[\bar v_t(\mathcal{S}\cup\{z_j\})-\bar v_t(\mathcal{S})\bigr]
}{
\binom{n-1}{|\mathcal{S}|}
}
\right| \\
&\le
\frac{1}{n}
\sum_{\mathcal{S}\subseteq\mathcal{D}\setminus\{z_j\}}
\frac{
\left|v_t(\mathcal{S}\cup\{z_j\})-\bar v_t(\mathcal{S}\cup\{z_j\})\right|
+
\left|v_t(\mathcal{S})-\bar v_t(\mathcal{S})\right|
}{
\binom{n-1}{|\mathcal{S}|}
} \\
&\le
\frac{1}{n}
\sum_{\mathcal{S}\subseteq\mathcal{D}\setminus\{z_j\}}
\frac{2\epsilon}{\binom{n-1}{|\mathcal{S}|}}
=
2\epsilon .
\end{aligned}
\end{equation}
The last equality follows because, for each coalition size $k$, there are $\binom{n-1}{k}$ subsets of $\mathcal{D}\setminus\{z_j\}$ with size $k$, and therefore
\begin{equation}
\frac{1}{n}
\sum_{\mathcal{S}\subseteq\mathcal{D}\setminus\{z_j\}}
\frac{1}{\binom{n-1}{|\mathcal{S}|}}
=
\frac{1}{n}
\sum_{k=0}^{n-1} 1
=
1.
\end{equation}
Thus, $\left|\phi(z_j,t)-\bar{\phi}(z_j,t)\right|\le 2\epsilon$.
\end{proof}

\subsection{Proof of Proposition~\ref{prop:test}}
\begin{customprop}{\ref{prop:test}}

\end{customprop}

\begin{proof}
Fix any coalition $S \subseteq \mathcal{D}$. By definition,
\[
v_t(S)=g(\theta(S),t).
\]
Since $\theta(S)\in\Theta$, Assumption~\ref{asm:lipschitz} gives
\[
|v_t(S)-v_{t'}(S)|
=
|g(\theta(S),t)-g(\theta(S),t')|
\le
L_\Gamma d_\Gamma(t,t').
\]

Now fix any training instance $z_i\in\mathcal{D}$. By the Shapley definition,
\[
\phi_i(t)
=
\frac{1}{n}
\sum_{S\subseteq \mathcal{D}\setminus\{z_i\}}
\frac{
v_t(S\cup\{z_i\})-v_t(S)
}{
\binom{n-1}{|S|}
}.
\]
Therefore,
\[
\begin{aligned}
|\phi_i(t)-\phi_i(t')|
&\le
\frac{1}{n}
\sum_{S\subseteq \mathcal{D}\setminus\{z_i\}}
\frac{
|v_t(S\cup\{z_i\})-v_{t'}(S\cup\{z_i\})|
+
|v_t(S)-v_{t'}(S)|
}{
\binom{n-1}{|S|}
} \\
&\le
\frac{1}{n}
\sum_{S\subseteq \mathcal{D}\setminus\{z_i\}}
\frac{
2L_\Gamma d_\Gamma(t,t')
}{
\binom{n-1}{|S|}
}.
\end{aligned}
\]
Since the Shapley weights sum to one, we obtain
\[
|\phi_i(t)-\phi_i(t')|
\le
2L_\Gamma d_\Gamma(t,t').
\]
Taking the maximum over all $z_i\in\mathcal{D}$ gives
\[
\|\boldsymbol{\phi}(t)-\boldsymbol{\phi}(t')\|_\infty
\le
2L_\Gamma d_\Gamma(t,t').
\]
\end{proof}

\subsection{Proof of Theorem~\ref{thm:test}}
\begin{customthm}{\ref{thm:test}}

\end{customthm}
\begin{proof}
By the interpolation estimator,
\begin{equation}
\widehat{\boldsymbol{\phi}}(t')
-
\boldsymbol{\phi}(t')
=
\sum_{t\in \mathcal{A}(t')}
w(t',t)
\left(
\boldsymbol{\phi}(t)-\boldsymbol{\phi}(t')
\right),
\end{equation}
where the weights form a convex combination. Therefore,
\begin{align}
\big\|
\widehat{\boldsymbol{\phi}}(t')
-
\boldsymbol{\phi}(t')
\big\|_\infty
&\leq
\sum_{t\in \mathcal{A}(t')}
w(t',t)
\big\|
\boldsymbol{\phi}(t)-\boldsymbol{\phi}(t')
\big\|_\infty \\
&\leq
\sum_{t\in \mathcal{A}(t')}
w(t',t)
2L_\Gamma  d_\Gamma(t',t) \\
&\leq
\sum_{t\in \mathcal{A}(t')}
w(t',t)
2L_\Gamma  \varepsilon \\
&=
2L_\Gamma  \varepsilon.
\end{align}
The second inequality follows from Proposition~\ref{prop:test}, and the last equality follows from $\sum_{t\in \mathcal{A}(t')}w(t',t)=1$. The theorem follows.
\end{proof}

\subsection{Proof of Proposition~\ref{prop:player}}
\begin{customprop}{\ref{prop:player}}

\end{customprop}
\begin{proof}
Let $\mathcal D^+=\mathcal D\cup\{z'\}$. For any $t\notin\mathcal R(z')$, by the definition of $\mathcal R(z')$, we have $\mathcal N^+(t)=\mathcal N(t)$. Since $z'$ is not in the original player set $\mathcal D$, this implies $z'\notin\mathcal N^+(t)$. Because the model evaluation at $t$ depends only on its local support, and the local support is unchanged after inserting $z'$, the new player has no effect on the utility at $t$. Thus, for any coalition $A\subseteq\mathcal D^+$,
\begin{equation}
    v_t^{+}(A)=v_t(A\cap\mathcal D).
\end{equation}
In particular, $z'$ is a null player at $t$, and hence
\begin{equation}
    \phi^{+}(z',t)=0.
\end{equation}

It remains to show that the Shapley values of existing players are unchanged. Fix any $z_j\in\mathcal D$ and let $n=|\mathcal D|$. For any $S\subseteq\mathcal D\setminus\{z_j\}$, define
\begin{equation}
    \Delta_t(S;z_j)=v_t(S\cup\{z_j\})-v_t(S).
\end{equation}
Since $z'$ is null at $t$, the same marginal contribution is obtained whether or not $z'$ is included in the coalition:
\begin{equation}
    v_t^{+}(S\cup\{z_j\})-v_t^{+}(S)
    =
    v_t^{+}(S\cup\{z'\}\cup\{z_j\})-v_t^{+}(S\cup\{z'\})
    =
    \Delta_t(S;z_j).
\end{equation}
Therefore, the Shapley value of $z_j$ in the new game is
\begin{align}
    \phi^{+}(z_j,t)
    &=
    \frac{1}{n+1}
    \sum_{S\subseteq\mathcal D\setminus\{z_j\}}
    \Delta_t(S;z_j)
    \left(
        \frac{1}{\binom{n}{|S|}}
        +
        \frac{1}{\binom{n}{|S|+1}}
    \right) \\
    &=
    \frac{1}{n}
    \sum_{S\subseteq\mathcal D\setminus\{z_j\}}
    \frac{\Delta_t(S;z_j)}{\binom{n-1}{|S|}} \\
    &=
    \phi(z_j,t).
\end{align}
Thus, for any $t\notin\mathcal R(z')$, the valuation of every existing player remains unchanged, and the new player has zero value at $t$. The proposition follows.
\end{proof}

\subsection{Proof of Theorem~\ref{thm:player}}
\begin{customthm}{\ref{thm:player}}

\end{customthm}
\begin{proof}
Fix an affected test point $t \in \mathcal{R}(z')$. Under coalition locality, the updated utility at $t$ depends only on the players in the new local support $\mathcal{N}^{+}(t)$. Hence, the updated cooperative game can be restricted to the player set $\mathcal{N}^{+}(t)$.

For any player $z \in \mathcal{N}^{+}(t)$, its updated Shapley value is the Shapley value in this restricted game:
\begin{equation}
\phi^{+}(z,t)
=
\frac{1}{|\mathcal{N}^{+}(t)|}
\sum_{S \subseteq \mathcal{N}^{+}(t)\setminus\{z\}}
\frac{v_t(S \cup \{z\}) - v_t(S)}
{\binom{|\mathcal{N}^{+}(t)|-1}{|S|}}.
\end{equation}

If $z \in \mathcal{N}(t)\setminus \mathcal{N}^{+}(t)$, then $z$ belongs to the old local support but is removed from the new local support. Since the updated game at $t$ is restricted to $\mathcal{N}^{+}(t)$, the utility is independent of $z$. Thus, for every coalition $S \subseteq \mathcal{N}^{+}(t)$, we have $v_t(S \cup \{z\}) = v_t(S)$, and the marginal contribution of $z$ is zero. Hence, $\phi^{+}(z,t)=0$.

Finally, if $z \notin \mathcal{N}(t)\cup\mathcal{N}^{+}(t)$, then $z$ is outside both the old and new local supports of $t$. By coalition locality, inserting $z'$ does not change the contribution of such a player to test point $t$. Therefore, $\phi^{+}(z,t)=\phi(z,t)$.

Combining the three cases, the theorem follows.
\end{proof}

\subsection{Proof of Corollary~\ref{cor:monotone}}
\begin{customcor}{\ref{cor:monotone}}

\end{customcor}
\begin{proof}
Fix any $t \in \mathcal{R}(z')$ such that $\mathcal{N}^{+}(t)=\mathcal{N}(t)\cup\{z'\}$, and let $n_t=|\mathcal{N}(t)|$. By Theorem~\ref{thm:player}, the updated Shapley values are computed in the local game restricted to $\mathcal{N}^{+}(t)$.

For the new player $z'$, the Shapley formula averages its marginal contribution over coalitions that do not contain $z'$. Since
$\mathcal{N}^{+}(t)\setminus\{z'\}=\mathcal{N}(t)$, these coalitions are exactly subsets of the old support $\mathcal{N}(t)$. Hence,
\begin{equation}
\phi^{+}(z',t)
=
\frac{1}{n_t+1}
\sum_{\mathcal{S}\subseteq\mathcal{N}(t)}
\frac{
    v_t(\mathcal{S}\cup\{z'\})-v_t(\mathcal{S})
}{
    \binom{n_t}{|\mathcal{S}|}
}.
\end{equation}
Thus, $z'$ is valued by its marginal effects over coalitions in $\mathcal{N}(t)$.

Next, consider an existing player $z_j \in \mathcal{N}(t)$. For any $\mathcal{S}\subseteq\mathcal{N}(t)\setminus\{z_j\}$, define
\[
m_j(\mathcal{S})
=
v_t(\mathcal{S}\cup\{z_j\})-v_t(\mathcal{S}).
\]
In the updated local game, coalitions excluding $z_j$ either do not contain $z'$ and have the form $\mathcal{S}$, or contain $z'$ and have the form $\mathcal{S}\cup\{z'\}$. Therefore,
\begin{equation}
\phi^{+}(z_j,t)
=
\frac{1}{n_t+1}
\sum_{\mathcal{S}\subseteq\mathcal{N}(t)\setminus\{z_j\}}
\left[
\frac{m_j(\mathcal{S})}{\binom{n_t}{|\mathcal{S}|}}
+
\frac{m_j(\mathcal{S}\cup\{z'\})}{\binom{n_t}{|\mathcal{S}|+1}}
\right].
\end{equation}
The original Shapley value of $z_j$ in the old local game is
\begin{equation}
\phi(z_j,t)
=
\frac{1}{n_t}
\sum_{\mathcal{S}\subseteq\mathcal{N}(t)\setminus\{z_j\}}
\frac{m_j(\mathcal{S})}{\binom{n_t-1}{|\mathcal{S}|}}.
\end{equation}
Using the identity
\[
\frac{1}{n_t\binom{n_t-1}{|\mathcal{S}|}}
=
\frac{1}{n_t+1}
\left(
\frac{1}{\binom{n_t}{|\mathcal{S}|}}
+
\frac{1}{\binom{n_t}{|\mathcal{S}|+1}}
\right),
\]
we can subtract $\phi(z_j,t)$ from $\phi^{+}(z_j,t)$ and obtain
\begin{equation}
\phi^{+}(z_j,t)-\phi(z_j,t)
=
\frac{1}{n_t+1}
\sum_{\mathcal{S}\subseteq\mathcal{N}(t)\setminus\{z_j\}}
\frac{
    m_j(\mathcal{S}\cup\{z'\})-m_j(\mathcal{S})
}{
    \binom{n_t}{|\mathcal{S}|+1}
}.
\end{equation}
Since
\[
m_j(\mathcal{S}\cup\{z'\})-m_j(\mathcal{S})
=
\big[v_t(\mathcal{S}\cup\{z_j,z'\})-v_t(\mathcal{S}\cup\{z'\})\big]
-
\big[v_t(\mathcal{S}\cup\{z_j\})-v_t(\mathcal{S})\big],
\]
this difference is exactly the change in the marginal contribution of $z_j$ caused by inserting $z'$. Defining
\[
\Delta_{j,t}(z')
=
\phi^{+}(z_j,t)-\phi(z_j,t),
\]
we obtain
\[
\phi^{+}(z_j,t)
=
\phi(z_j,t)+\Delta_{j,t}(z').
\]
Therefore, under monotone support expansion, the local update reuses the original Shapley values and updates existing players through an additive correction. The corollary follows.
\end{proof}

\subsection{Proof of Theorem~\ref{thm:player_approx}}
\label{proof:player_approx}
\begin{theorem}\label{thm:player_approx}

Let $\mathcal{D}^{+}=\mathcal{D}\cup\{z'\}$ and let $\mathcal{N}^{+}(t)$ be the local support of $t$ after inserting $z'$. Define the localized updated game
\[
\bar v_t^{+}(\mathcal{S})
=
v_t^{+}(\mathcal{S}\cap \mathcal{N}^{+}(t)),
\qquad
\mathcal{S}\subseteq \mathcal{D}^{+}.
\]
If
\[
\eta_t^{+}
=
\sup_{\mathcal{S}\subseteq\mathcal{D}^{+}}
\left|
v_t^{+}(\mathcal{S})-\bar v_t^{+}(\mathcal{S})
\right|
\le \epsilon,
\]
then for every $z\in\mathcal{D}^{+}$,
\[
\left|
\phi^{+}(z,t)-\bar{\phi}^{+}(z,t)
\right|
\le 2\epsilon,
\]
where $\bar{\phi}^{+}(z,t)$ is the Shapley value under $\bar v_t^{+}$. Equivalently, the local update over $\mathcal{N}^{+}(t)$ approximates the full updated Shapley value within $2\epsilon$.
\end{theorem}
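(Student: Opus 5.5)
The plan is to treat Theorem~\ref{thm:player_approx} as Lemma~\ref{lem:locality} applied to the updated game. Nothing in the proof of Lemma~\ref{lem:locality} uses properties of $\mathcal{D}$ or $\mathcal{N}(t)$ beyond their being a finite player set and a fixed subset of it. So I would replace $\mathcal{D}$ by $\mathcal{D}^{+}$ (with $n+1$ players), $v_t$ by $v_t^{+}$, and $\mathcal{N}(t)$ by $\mathcal{N}^{+}(t)$. The localized game $\bar v_t^{+}(\mathcal{S})=v_t^{+}(\mathcal{S}\cap\mathcal{N}^{+}(t))$ then has exactly the form required there, and the hypothesis $\eta_t^{+}\le\epsilon$ is the hypothesis $\eta_t\le\epsilon$ of the lemma.

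For a self-contained argument I would repeat the lemma's steps directly.

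\textbf{Step 1.} Fix $z\in\mathcal{D}^{+}$ and write both $\phi^{+}(z,t)$ and $\bar\phi^{+}(z,t)$ with the Shapley formula over $\mathcal{S}\subseteq\mathcal{D}^{+}\setminus\{z\}$, using weights $\frac{1}{n+1}\binom{n}{|\mathcal{S}|}^{-1}$.

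\textbf{Step 2.} Subtract the two expressions. Group the terms so that each summand becomes $[v_t^{+}(\mathcal{S}\cup\{z\})-\bar v_t^{+}(\mathcal{S}\cup\{z\})]-[v_t^{+}(\mathcal{S})-\bar v_t^{+}(\mathcal{S})]$. By the triangle inequality and the definition of $\eta_t^{+}$, each summand is bounded by $2\epsilon$ in absolute value.

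\textbf{Step 3.} Use the normalization $\frac{1}{n+1}\sum_{k=0}^{n}\binom{n}{k}\cdot\binom{n}{k}^{-1}=1$, i.e.\ the Shapley weights sum to one. This gives $|\phi^{+}(z,t)-\bar\phi^{+}(z,t)|\le 2\epsilon$.

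The equivalence stated at the end of the theorem is a consequence of the null-player property. Every $z\notin\mathcal{N}^{+}(t)$ is a null player in $\bar v_t^{+}$. Adding or removing a null player leaves the other players' Shapley values unchanged; this is the computation in the proof of Proposition~\ref{prop:player}, iterated once per null player. Hence $\bar\phi^{+}(z,t)$ equals the restricted local-game value of Theorem~\ref{thm:player} for $z\in\mathcal{N}^{+}(t)$, and equals $0$ otherwise.

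No step is genuinely difficult. The only point needing care is this last identification: showing that restricting to $\mathcal{N}^{+}(t)$ preserves the Shapley values of the remaining players. The plan is to remove the null players one at a time, at each step using the weight identity $\frac{1}{N\binom{N-1}{s}}=\frac{1}{N+1}\bigl(\binom{N}{s}^{-1}+\binom{N}{s+1}^{-1}\bigr)$.
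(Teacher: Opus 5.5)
Your proposal is correct and follows the paper's proof: both obtain the theorem by applying Lemma~\ref{lem:locality} to the game on $\mathcal{D}^{+}$ with utility $v_t^{+}$ and support $\mathcal{N}^{+}(t)$, and your Steps 1--3 simply unfold that lemma's argument with $n+1$ players. Your null-player argument, which identifies $\bar\phi^{+}(z,t)$ with the restricted local-game value on $\mathcal{N}^{+}(t)$ and with $0$ outside it, correctly fills in the ``Equivalently'' clause, which the paper states without proof.
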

\begin{proof}
The result follows by applying Lemma~\ref{lem:locality} to the updated game with player set $\mathcal{D}^{+}$, utility $v_t^{+}$, and local support $\mathcal N^+(t)$. The condition $\eta_t^{+}\le\epsilon$ is exactly the corresponding utility approximation condition for this updated game. Hence, for every $z\in\mathcal{D}^{+}$, the Shapley value computed under the localized updated game $\bar v_t^{+}$ differs from the full updated Shapley value by at most $2\epsilon$.
\end{proof}

\subsection{Proof of Corollary~\ref{cor:cost}}
\label{proof:cost}
\begin{corollary}\label{cor:cost}
Assume the monotone support expansion condition in Corollary~\ref{cor:monotone}. Then, for updating all affected columns $t\in\mathcal{R}(z')$, the number of new utility evaluations is at most
\begin{equation}
\sum_{t\in\mathcal{R}(z')}2^{|\mathcal{N}(t)|}
\le
|\mathcal{R}(z')|2^{K_{\max}},
\end{equation}
where
\[
K_{\max}=\max_{t\in\mathcal{R}(z')}|\mathcal{N}(t)|.
\]

\end{corollary}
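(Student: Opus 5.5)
The plan is a counting argument built on the explicit update formulas in the proof of Corollary~\ref{cor:monotone}, together with Proposition~\ref{prop:player}.

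\textbf{Unaffected tasks.} First I dispose of the tasks outside $\mathcal{R}(z')$. By Proposition~\ref{prop:player}, every column with $t\notin\mathcal{R}(z')$ is unchanged, and $\phi^{+}(z',t)=0$ there. So these columns need no new utility evaluations, and it suffices to count evaluations for each affected $t\in\mathcal{R}(z')$ separately and then sum.

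\textbf{Fixed affected task.} Fix $t\in\mathcal{R}(z')$ with $\mathcal{N}^{+}(t)=\mathcal{N}(t)\cup\{z'\}$, and write $n_t=|\mathcal{N}(t)|$. Every coalition of the local game on $\mathcal{N}^{+}(t)$ has one of two forms:
\begin{itemize}
\item $\mathcal{S}$ with $\mathcal{S}\subseteq\mathcal{N}(t)$;
\item $\mathcal{S}\cup\{z'\}$ with $\mathcal{S}\subseteq\mathcal{N}(t)$.
\end{itemize}
The utilities $v_t(\mathcal{S})$ for $\mathcal{S}\subseteq\mathcal{N}(t)$ are exactly those already evaluated when the old local game for $t$ was solved. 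I treat them as cached; this reuse is precisely the content of Corollary~\ref{cor:monotone}.

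\textbf{Which evaluations are new.} I then inspect the formulas derived there.
\begin{itemize}
\item $\phi^{+}(z',t)$ needs the values $v_t(\mathcal{S}\cup\{z'\})$ and $v_t(\mathcal{S})$ for $\mathcal{S}\subseteq\mathcal{N}(t)$.
\item The correction $\Delta_{j,t}(z')$ for each $z_j\in\mathcal{N}(t)$ needs $m_j(\mathcal{S}\cup\{z'\})-m_j(\mathcal{S})$. This involves $v_t(\mathcal{S}\cup\{z_j,z'\})$, $v_t(\mathcal{S}\cup\{z'\})$, and old values.
\end{itemize}
Every new quantity therefore has the form $v_t(\mathcal{S}'\cup\{z'\})$ with $\mathcal{S}'\subseteq\mathcal{N}(t)$. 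There are exactly $2^{n_t}$ such coalitions, shared across all $z_j$ and across $z'$'s own value. Each is evaluated once, so at most $2^{|\mathcal{N}(t)|}$ new evaluations are needed for $t$.

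\textbf{Summing.} Summing over $t\in\mathcal{R}(z')$ and using $|\mathcal{N}(t)|\le K_{\max}$ gives
\[
\sum_{t\in\mathcal{R}(z')}2^{|\mathcal{N}(t)|}\le|\mathcal{R}(z')|\,2^{K_{\max}}.
\]

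\textbf{Main obstacle.} The delicate point is justifying that old utilities are available and valid after the insertion, i.e.\ that $v^{+}_t(\mathcal{S})=v_t(\mathcal{S})$ for $\mathcal{S}\subseteq\mathcal{N}(t)$. I would argue this from the standing locality assumption that the utility at $t$ is determined by the coalition restricted to its support: a model trained on $\mathcal{S}$ does not involve $z'$ at all.

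A secondary subtlety is that coalitions may be shared across different tasks. This can only reduce the count, so the bound remains an upper bound.
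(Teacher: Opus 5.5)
Your proposal is correct and follows essentially the same counting argument as the paper: under $\mathcal{N}^{+}(t)=\mathcal{N}(t)\cup\{z'\}$, the only new coalitions are $\mathcal{S}\cup\{z'\}$ with $\mathcal{S}\subseteq\mathcal{N}(t)$, which gives $2^{|\mathcal{N}(t)|}$ evaluations per affected task, summed over $\mathcal{R}(z')$ and bounded via $K_{\max}$. Your extra checks make explicit what the paper's proof leaves implicit: unaffected columns need no evaluations by Proposition~\ref{prop:player}, and the cached old-game utilities stay valid because $\theta(\mathcal{S})$ does not involve $z'$.
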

\begin{proof}
Under monotone support expansion, for every affected test point $t\in\mathcal{R}(z')$, the updated support satisfies
\[
\mathcal N^+(t)=\mathcal{N}(t)\cup\{z'\}.
\]
By Corollary~\ref{cor:monotone}, the update only requires evaluating coalitions that contain the new player $z'$ together with a subset of the original support $\mathcal{N}(t)$. Since $\mathcal{N}(t)$ has $2^{|\mathcal{N}(t)|}$ subsets, the number of such coalitions for a fixed affected test point $t$ is at most $2^{|\mathcal{N}(t)|}$.

Summing over all affected test points gives
\[
\sum_{t\in\mathcal{R}(z')}2^{|\mathcal{N}(t)|}.
\]
By the definition of
\[
K_{\max}=\max_{t\in\mathcal{R}(z')}|\mathcal{N}(t)|,
\]
we have $2^{|\mathcal{N}(t)|}\le 2^{K_{\max}}$ for every $t\in\mathcal{R}(z')$. Therefore,
\[
\sum_{t\in\mathcal{R}(z')}2^{|\mathcal{N}(t)|}
\le
\sum_{t\in\mathcal{R}(z')}2^{K_{\max}}
=
|\mathcal{R}(z')|2^{K_{\max}}.
\]
The corollary follows.
\end{proof}

\subsection{Utility Locality under Regularized ERM}
\label{app:erm_utility_locality}

We provide a representative derivation showing that Assumption~\ref{asm:lipschitz} holds under a standard regularized empirical risk minimization setting. 
Let each test point be denoted by $t=(x_t,y_t)$, and let the model trained on a nonempty coalition $S\subseteq\mathcal{D}$ be obtained by
\begingroup\small
\begin{equation}
\theta(S)
=
\arg\min_{\theta}
\frac{1}{|S|}
\sum_{z_i\in S}
\ell(f_\theta(x_i),y_i)
+
\frac{\mu}{2}\|\theta\|_2^2 ,
\label{eq:erm_objective}
\end{equation}
\endgroup
where $\mu>0$ is the regularization strength. 
For the empty coalition, we use a fixed reference model $\theta(\emptyset)$ satisfying the same norm bound stated below. 
Assume that the predictor is linear in a model-induced representation $\psi(t)$:
\begingroup\small
\begin{equation}
f_\theta(x_t)=\theta^\top \psi(t).
\end{equation}
\endgroup
The utility is defined as the negative loss,
\begingroup\small
\begin{equation}
v_t(S)=g(\theta(S),t)=-\ell(f_{\theta(S)}(x_t),y_t).
\end{equation}
\endgroup

\begin{proposition}[Utility locality under regularized ERM]
\label{prop:erm}
Assume that the model prediction has the form
$a_\theta(t)=\langle \theta,\psi(t)\rangle$, where $\psi(t)$ is the local computation representation of $t$. 
Assume that the loss $\ell(a,y)$ is nonnegative and $L_\ell$-Lipschitz in its prediction argument $a$ for every fixed label $y$, and that $\ell(0,y)\le C_0$ for all labels $y$. 
For $\mu$-regularized ERM, any solution $\theta(S)$ for coalition $S\subseteq \mathcal{D}$ satisfies
\begingroup\small
\begin{equation}
\|\theta(S)\|_2
\le
B
:=
\sqrt{\frac{2C_0}{\mu}} .
\end{equation}
\endgroup
Consequently, for any two test points $t$ and $t'$ with the same evaluation label, Assumption~\ref{asm:lipschitz} holds with
\begingroup\small
\begin{equation}
d_\Gamma(t,t')
=
\|\psi(t)-\psi(t')\|_2,
\qquad
L_\Gamma
=
L_\ell B
=
L_\ell\sqrt{\frac{2C_0}{\mu}} .
\end{equation}
\endgroup
\end{proposition}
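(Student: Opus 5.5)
The argument has two steps. First, a norm bound on every ERM solution, obtained by comparing its objective value with that of the zero parameter. Second, a Lipschitz estimate of $g(\theta,\cdot)$ in the representation $\psi$, which uses the norm bound through Cauchy--Schwarz.

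\textbf{Step 1: norm bound.} Fix a nonempty coalition $S$ and write $F_S(\theta)$ for the objective in \eqref{eq:erm_objective}. Since $\theta(S)$ minimizes $F_S$, we have $F_S(\theta(S))\le F_S(0)$. Because $f_0(x_i)=0$, the right-hand side equals $\frac{1}{|S|}\sum_{z_i\in S}\ell(0,y_i)\le C_0$. The loss is nonnegative, so the empirical-risk term on the left is at least $0$. Dropping it gives
\[
\frac{\mu}{2}\|\theta(S)\|_2^2\le C_0,
\]
that is, $\|\theta(S)\|_2\le\sqrt{2C_0/\mu}=B$. For $S=\emptyset$, the statement stipulates that the reference model $\theta(\emptyset)$ obeys the same bound. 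So every $\theta(S)$ lies in the class $\Theta=\{\theta:\|\theta\|_2\le B\}$, which is the first requirement of Assumption~\ref{asm:lipschitz}. This argument does not need a unique minimizer: any minimizer satisfies $F_S(\theta(S))\le F_S(0)$.

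\textbf{Step 2: Lipschitz estimate.} Fix $\theta\in\Theta$, and let $t,t'$ be two tasks with the same label $y$. Then
\[
|g(\theta,t)-g(\theta,t')|=|\ell(\langle\theta,\psi(t)\rangle,y)-\ell(\langle\theta,\psi(t')\rangle,y)|.
\]
By $L_\ell$-Lipschitzness of $\ell$ in its first argument, this is at most $L_\ell\,|\langle\theta,\psi(t)-\psi(t')\rangle|$. By Cauchy--Schwarz and $\|\theta\|_2\le B$, this is in turn at most $L_\ell B\,\|\psi(t)-\psi(t')\|_2$. This is exactly the inequality in Assumption~\ref{asm:lipschitz}, with $d_\Gamma(t,t')=\|\psi(t)-\psi(t')\|_2$ and $L_\Gamma=L_\ell B$.

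\textbf{Main obstacle.} The delicate point is the scope of the claim, not any calculation. The Lipschitz bound only compares tasks with the same evaluation label, since $\ell(\cdot,y)$ is only controlled for a fixed $y$. I would therefore state explicitly that Assumption~\ref{asm:lipschitz} is verified on the set of same-label task pairs. Under that restriction, Proposition~\ref{prop:test} and Theorem~\ref{thm:test} apply when the interpolation neighbors of a task share its label.
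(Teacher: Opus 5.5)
Your proposal is correct and matches the paper's proof step for step. You compare the ERM objective at $\theta(S)$ with its value at $\theta=0$ and drop the nonnegative risk term to get $\|\theta(S)\|_2\le\sqrt{2C_0/\mu}$, then chain the $L_\ell$-Lipschitz property of $\ell(\cdot,y)$, Cauchy--Schwarz, and the norm bound for same-label tasks; taking $\Theta=\{\theta:\|\theta\|_2\le B\}$ explicitly and handling $\theta(\emptyset)$ are minor clarifications, since the paper applies the chain directly to $\theta(S)$.
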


\begin{proof}
Consider the $\mu$-regularized ERM objective for any coalition $S\subseteq\mathcal{D}$:
\[
    \theta(S)
    \in
    \arg\min_{\theta\in\Theta}
    \frac{1}{|S|}
    \sum_{(x_i,y_i)\in S}
    \ell(a_\theta(x_i),y_i)
    +
    \frac{\mu}{2}\|\theta\|_2^2 .
\]
Assume $a_\theta(t)=\langle \theta,\psi(t)\rangle$. Since $\theta(S)$ minimizes the objective, its objective value is no larger than that of $\theta=0$. Therefore,
\[
    \frac{1}{|S|}
    \sum_{(x_i,y_i)\in S}
    \ell(a_{\theta(S)}(x_i),y_i)
    +
    \frac{\mu}{2}\|\theta(S)\|_2^2
    \le
    \frac{1}{|S|}
    \sum_{(x_i,y_i)\in S}
    \ell(0,y_i).
\]
By nonnegativity of the loss and the assumption $\ell(0,y)\le C_0$, we have
\[
    \frac{\mu}{2}\|\theta(S)\|_2^2
    \le
    C_0.
\]
Thus,
\[
    \|\theta(S)\|_2
    \le
    B
    :=
    \sqrt{\frac{2C_0}{\mu}} .
\]

Now consider two test points $t$ and $t'$ with the same evaluation label $y_t=y_{t'}$. Let the evaluation function be
\[
    g(\theta,t)=-\ell(a_\theta(t),y_t).
\]
Then, for any $\theta(S)$,
\[
\begin{aligned}
    |g(\theta(S),t)-g(\theta(S),t')|
    &=
    |\ell(a_{\theta(S)}(t),y_t)
    -
    \ell(a_{\theta(S)}(t'),y_t)| \\
    &\le
    L_\ell
    |a_{\theta(S)}(t)-a_{\theta(S)}(t')| \\
    &=
    L_\ell
    |\langle \theta(S),\psi(t)-\psi(t')\rangle| \\
    &\le
    L_\ell
    \|\theta(S)\|_2
    \|\psi(t)-\psi(t')\|_2 \\
    &\le
    L_\ell B
    \|\psi(t)-\psi(t')\|_2 .
\end{aligned}
\]
Therefore, Assumption~\ref{asm:lipschitz} holds with
\[
    d_\Gamma(t,t')
    =
    \|\psi(t)-\psi(t')\|_2,
    \qquad
    L_\Gamma
    =
    L_\ell B
    =
    L_\ell\sqrt{\frac{2C_0}{\mu}} .
\]
The proposition follows.
\end{proof}

Proposition~\ref{prop:erm} applies to test points with the same evaluation label. 
When the utility explicitly depends on the evaluation label and $y_t$ may differ from $y_{t'}$, the same argument can be extended by augmenting the model-induced distance with a label-mismatch term. 
For example, if the utility is bounded as $|v_t(S)|\le B_v$ for all $t$ and $S$, then Assumption~\ref{asm:lipschitz} holds under
\begingroup\small
\begin{equation}
d_\Gamma(t,t')
=
\|\psi(t)-\psi(t')\|_2
+
\mathbf{1}[y_t\neq y_{t'}],
\end{equation}
\endgroup
with
\begingroup\small
\begin{equation}
L_\Gamma
=
\max\left\{
L_\ell B,
2B_v
\right\}.
\end{equation}
\endgroup
When $y_t=y_{t'}$, the representation term gives the Lipschitz bound from Proposition~\ref{prop:erm}. 
When $y_t\neq y_{t'}$, the indicator term ensures $d_\Gamma(t,t')\ge 1$, while boundedness gives
$\left|v_t(S)-v_{t'}(S)\right|\le 2B_v$.

Furthermore, the ERM derivation provides one sufficient condition rather than a complete characterization. 
Similar utility-locality bounds can be derived by defining $d_\Gamma$ according to each model's local computation structure, such as neighbor-weight profiles for weighted KNN, kernel-relevance profiles for kernel models, and path or leaf overlap for decision trees. 
When the utility is Lipschitz or bounded with respect to the chosen structure, Assumption~\ref{asm:lipschitz} follows with the corresponding constant. 
For neural models, this reasoning applies most directly when the encoder is fixed or the learned representation is uniformly stable.
\section{Model-Induced Locality}
\label{apx:model_locality}

Here, we define the support set and model-induced distance $d_\Gamma$ used in Section~\ref{sec:locality} for each model family in our experiments. In all cases, $d_\Gamma$ compares the local computation structures that determine the task-specific games $v_t$ and $v_{t'}$, rather than raw input features. The concrete form of $d_\Gamma$ depends on the model family: neighbor relevance for KNN, decision paths for trees, kernel relevance for SVMs, propagation influence for GNNs, and representation-space alignment for deep neural networks.

\subsection{Weighted K-nearest-neighbor classifiers}
For WKNN, the local game at task $t$ is determined by its $K$ nearest training points and their distance-based voting weights. We define $\mathcal{N}(t)$ as the $K$-nearest-neighbor set of $t$. Let $\omega_t(z)\ge 0$ be the relevance weight assigned to training point $z\in\mathcal D$, with $\omega_t(z)=0$ for $z\notin \mathcal{N}(t)$. We set
\[
  d_\Gamma(t,t')
  =
  1
  -
  \frac{\sum_{z\in \mathcal D}\min\{\omega_t(z),\omega_{t'}(z)\}}
       {\sum_{z\in \mathcal D}\max\{\omega_t(z),\omega_{t'}(z)\}}.
\]
This is the weighted Tanimoto distance between the two neighbor-relevance profiles. In the unweighted case, it reduces to Jaccard distance between $K$-neighbor supports.

\subsection{Decision trees}
For a fitted decision tree, the local computation structure of $t$ is its root-to-leaf decision path. We define $\mathcal{N}(t)$ as the set of training points that fall into the same leaf as $t$, and let $\mathrm{Path}(t)$ denote the set of internal nodes on the decision path of $t$. The model-induced distance is
\[
  d_\Gamma(t,t')
  =
  1
  -
  \frac{|\mathrm{Path}(t)\cap\mathrm{Path}(t')|}
       {|\mathrm{Path}(t)\cup\mathrm{Path}(t')|}
  \in[0,1].
\]
This distance captures how much of the recursive partition structure is shared by the two tasks. If two points reach the same leaf, their paths coincide and $d_\Gamma(t,t')=0$.

\subsection{Kernel support-vector machines}
For an RBF-SVM with bandwidth $\gamma>0$ and kernel $K_\gamma(t,z)=\exp(-\gamma\|t-z\|^2)$, the local game at task $t$ is determined by the kernel relevance of training points to $t$. We define $\mathcal{N}(t)$ as the training points with non-negligible kernel relevance, such as the top-$K$ points under $K_\gamma(t,z)$ or those above a fixed threshold. The model-induced distance compares the two kernel-relevance profiles:
\[
  d_\Gamma(t,t')
  =
  1
  -
  \frac{\sum_{z\in \mathcal D}\min\{K_\gamma(t,z),K_\gamma(t',z)\}}
       {\sum_{z\in \mathcal D}\max\{K_\gamma(t,z),K_\gamma(t',z)\}}.
\]
Thus, two tasks are close when they rely on similar training points under the kernel geometry.

\subsection{Graph neural networks}
For an $L$-layer message-passing GNN, the prediction at a node task $t$ depends on training nodes that can influence $t$ through graph propagation. We define $\mathcal{N}(t)$ as the training nodes with non-negligible propagation influence on $t$, implemented by an $L$-hop neighborhood or by retaining the top-influence training nodes. To measure influence, we use the personalized PageRank vector $\boldsymbol{\pi}_t$, centered at $t$ and defined by
\[
  \boldsymbol{\pi}_t
  =
  \alpha \boldsymbol e_t
  +
  (1-\alpha)\widetilde{\boldsymbol A}\boldsymbol{\pi}_t,
\]
where $\widetilde{\boldsymbol A}$ is the normalized adjacency matrix and $\alpha\in(0,1)$ is the teleport probability. The model-induced distance is
\[
  d_\Gamma(t,t')
  =
  1
  -
  \frac{\sum_{z\in \mathcal D}\min\{\boldsymbol{\pi}_t(z),\boldsymbol{\pi}_{t'}(z)\}}
       {\sum_{z\in \mathcal D}\max\{\boldsymbol{\pi}_t(z),\boldsymbol{\pi}_{t'}(z)\}}.
\]
Compared with hard neighborhood overlap, this distance accounts for heterogeneous propagation strength induced by the graph structure.

\subsection{Deep neural networks}
For a deep neural network classifier $f=h\circ\phi$, where $\phi:\mathcal X\to\mathbb R^d$ is a learned representation encoder and $h$ is the prediction head, the local computation structure of task $t$ is represented by its embedding $\phi(t)$. We define $\mathcal{N}(t)$ as the nearest training points to $t$ in the embedding space and measure task similarity by angular alignment:
\[
  d_\Gamma(t,t')
  =
  \frac{1}{2}
  \left(
  1
  -
  \frac{\langle \phi(t),\phi(t')\rangle}
       {\|\phi(t)\|_2\|\phi(t')\|_2}
  \right)
  \in[0,1].
\]
This formulation applies to CNNs, RNNs, and Transformers by taking $\phi(t)$ as the penultimate-layer representation, final hidden state, or pooled token representation, respectively. The factor $1/2$ rescales cosine distance to the unit interval. The encoder $\phi$ is fixed during incremental updates, so $d_\Gamma$ provides a stable geometry for both task-incremental and player-incremental valuation.

\subsection{Label compatibility}
For supervised classification, the utility $v_t(S)=g(\theta(S),t)$ depends on the label of the task. Therefore, two tasks with similar local computation structures but different labels may still induce different cooperative games. When labels are available, we restrict interpolation to label-compatible columns. Equivalently, we set
\[
  d_\Gamma(t,t')=\infty
  \quad
  \text{if } y_t\neq y_{t'}.
\]
This restriction simply reflects that the label is part of the evaluation condition defining $v_t$.
\section{Additional Experimental Results}
\label{apx:experiment}

\subsection{Detailed Setup}
\label{apx:exp:setup}
\begin{table}[h]
    \centering
    \caption{Dataset statistics used across all experiments.}
    \label{tabdataset_stats}
    \small
    \begin{tabular}{lrrrrl}
        \toprule
        Dataset & $|\mathcal{D_\text{train}}|$ & $|\mathcal{D_\text{test}}|$ & Features & Classes & Domain \\
        \midrule
        Iris          & 105   & 45    & 4     & 3 & Tabular \\
        Breast~Cancer & 398   & 171   & 30    & 2 & Tabular \\
        MNIST       & 1{,}000 & 1{,}000 & 784 & 10 & Image  \\
        Cora          & 1{,}708 & 1{,}000   & 1{,}433 & 7 & Graph \\
        \bottomrule
    \end{tabular}
\end{table}

\vparagraph{Datasets and model families.}
Each setting pairs a model family with a dataset whose structure matches the model's locality mechanism, providing a direct test of whether utility and coalition locality hold across qualitatively distinct prediction structures.
\begin{itemize}[leftmargin=1.2em,itemsep=2pt,topsep=2pt]
  \item \textbf{WKNN/MNIST.} We use a weighted $K$-nearest-neighbor classifier with $K{=}5$ over CNN-extracted MNIST image features. The local support $\mathcal{N}(t)$ is the set of $2K{=}10$ nearest training points to $t$ in feature space, chosen to cover the effective neighborhood of the $K$-NN decision rule with a small buffer beyond the $K$ predicting neighbors.
  \item \textbf{Decision Tree/Iris.} We use a depth-limited decision tree trained with the Gini criterion. The local support $\mathcal{N}(t)$ is the set of training points reaching the same leaf as $t$ under the fitted tree, which coincides with the players that determine the prediction at $t$ under the partition structure.
  \item \textbf{SVM/Breast~Cancer.} We use an RBF-kernel support-vector machine with bandwidth tuned by cross-validation on the precomputation set. The local support is $\mathcal{N}(t)=\{z\in\mathcal D:K_\gamma(t,z)\geq 0.5\}$, retaining training points with non-negligible kernel relevance to the test point.
  \item \textbf{CNN/MNIST.} We use a three-layer convolutional network trained on raw MNIST pixel inputs. The local support $\mathcal{N}(t)$ is the top-$20$ training points by penultimate-layer embedding similarity to $\phi(t)$, where $\phi$ is the encoder trained on the precomputation set and held fixed throughout incremental updates.
  \item \textbf{GNN/Cora.} We use a two-layer graph convolutional network on the Cora citation network. The local support $\mathcal{N}(t)$ is the set of training nodes within the two-hop ego network of $t$, matching the receptive field of the two-layer message-passing model.
\end{itemize}
Dataset statistics are summarized in Table~\ref{tabdataset_stats}.

\vparagraph{Evaluation Protocol.}
We use $\mathcal D$ to construct the self-valuation matrix $\boldsymbol{\Phi}$ and reserve a held-out pool that is not used during precomputation. The role of the held-out pool depends on the incremental setting. In task-incremental valuation, the reserved points are treated as incoming prediction tasks while the training set remains fixed. In player-incremental valuation, they are treated as newly arriving training instances that expand the training set, while the evaluation points are kept fixed. We reserve $30\%$ of the data for this held-out pool on Iris, Breast Cancer, and Cora, and sample $1{,}000$ held-out points for MNIST. In all incremental experiments, the held-out pool is processed as a stream, where points are introduced one at a time and the valuation matrix is updated upon each arrival. 

\vparagraph{Baselines.}
We compare D-Shap with both static and dynamic baselines.
\begin{itemize}[leftmargin=1.2em,itemsep=2pt,topsep=2pt]
  \item \textbf{Static baselines.}
  We include three static Shapley estimators that recompute values from scratch after each update. \emph{Global-MC Recompute} is a high-budget Monte Carlo estimator and serves as the reference for quality evaluation. \emph{TMC-Shapley}~\cite{ghorbani2019data} estimates marginal contributions over random permutations and truncates a permutation once the coalition utility has converged, thereby reducing unnecessary evaluations for late-arriving players in a permutation. \emph{Comple-S}~\cite{sun2024shapley} improves sampling efficiency by pairing each sampled coalition with its complementary coalition, so that one sampled subset provides information about two coupled marginal-contribution terms. These methods are general-purpose but do not reuse the existing Shapley matrix, so they still require substantial recomputation under task or player arrivals.
  
  \item \textbf{Task-incremental baselines.}
  For learned Shapley predictors, \emph{Fast-DataShapley}~\cite{sun2026fast} trains a query-conditioned explainer $\phi_\theta(x,y)$, implemented as a two-layer MLP supervised by the Shapley-kernel CWLS loss. We train it for $10^5$ iterations across all settings, requiring more than $10$ hours on CNN/MNIST and GNN/Cora. \emph{Amortized-S}~\cite{covert2024stochastic} predicts scalar Shapley scores via a regressor $f_\theta(t,z)$, implemented as a two-layer MLP that takes a task--player pair as input and is trained on samples from the self-valuation matrix $\Phi^{\mathrm{self}}$ to convergence under early stopping. Reported $T$ for learned predictors is inference-only, and training cost is excluded.

  \item \textbf{Player-incremental baselines.}
  For dynamic shapley valuation, \emph{B-Delta}~\cite{xia2025computing} incrementally updates marginal contribution estimates by reusing previously sampled permutations and updating the terms affected by inserted players. However, its update cost still grows with the number of existing players and sampled permutations, and it does not exploit task-specific coalition locality. We exclude another player-incremental baseline, DeltaShap~\cite{zhang2023dynamic}, from the main comparison because it fails to produce results within our compute budget.
\end{itemize}

\vparagraph{Metrics.}
We measure update quality by comparing each method's final Shapley matrix, after the entire stream of arrivals, against the high-budget Global-MC reference. Let $\widehat{\boldsymbol{\Phi}}$ denote the estimated Shapley matrix and $\boldsymbol{\Phi}^{\star}$ denote the Global-MC reference.

Following the large-game perspective in cooperative game theory, many players in large cooperative games can be individually insignificant, with non-negligible value concentrated on a small subset of influential players~\citep{shapiro1978values}. This phenomenon is also consistent with model-induced locality in data valuation: each task depends primarily on a small local support set, while players outside the support have zero or near-zero contribution. Therefore, when computing correlation metrics, we filter entries whose reference magnitude is below $10^{-3}$, since these near-zero values are dominated by Monte Carlo noise and do not provide reliable ordering information. This filtering removes numerically unstable entries rather than excluding meaningful contributors. Specifically, we evaluate only entries in
\[
  \Omega
  =
  \{(i,j): |\Phi^{\star}_{i,j}|>10^{-3}\}.
\]

\begin{itemize}[leftmargin=1.2em,itemsep=2pt,topsep=2pt]
  \item \textbf{Spearman rank correlation $\rho$.}
  We compute
  \[
    \rho
    =
    \operatorname{corr}
    \left(
    \operatorname{rank}\bigl(\widehat{\Phi}_{\Omega}\bigr),
    \operatorname{rank}\bigl(\Phi^{\star}_{\Omega}\bigr)
    \right),
  \]
  which measures whether a method preserves the relative ordering of Shapley values.

  \item \textbf{Pearson correlation $r$.}
  We compute
  \[
    r
    =
    \frac{
    \sum_{(i,j)\in\Omega}
    \left(\widehat{\Phi}_{i,j}-\bar{\widehat{\Phi}}\right)
    \left(\Phi^{\star}_{i,j}-\bar{\Phi}^{\star}\right)
    }{
    \sqrt{
    \sum_{(i,j)\in\Omega}
    \left(\widehat{\Phi}_{i,j}-\bar{\widehat{\Phi}}\right)^2
    }
    \sqrt{
    \sum_{(i,j)\in\Omega}
    \left(\Phi^{\star}_{i,j}-\bar{\Phi}^{\star}\right)^2
    }
    },
  \]
  where $\bar{\widehat{\Phi}}$ and $\bar{\Phi}^{\star}$ are the averages over $\Omega$. Pearson $r$ measures calibration, i.e., linear agreement between estimated and reference values.

  \item \textbf{Wall-clock time $T$.}
  Following~\cite{zhang2023dynamic}, we report wall-clock time $T$ in seconds, defined as the time required to process an arriving task or training instance. This metric evaluates update efficiency.
\end{itemize}

\vparagraph{Implementation details.}
We use high-budget Global-MC recomputation as the reference for evaluating the quality of all baselines. Following~\cite{ghorbani2019data}, all Monte Carlo methods use the same stopping criterion and are capped at $5{,}000$ samples. Every $100$ samples, we compare the current estimates with those from the previous batch and stop if
\[
\frac{1}{n}\sum_i
\frac{|\phi_i^{(m)}-\phi_i^{(m-100)}|}
     {|\phi_i^{(m)}|+10^{-12}}
<0.05.
\]
This criterion corresponds to an average relative change below $5\%$. Each experiment is repeated with five independent random seeds, and we report the average performance. All experiments are conducted on a single machine with $64$ Intel Xeon E5-2640 v4 CPUs and $64\,\mathrm{GB}$ RAM. For GNN and CNN workloads, we additionally use four NVIDIA RTX A5000 GPUs.
\subsection{Further Analysis of Model-Induced Locality}
\label{apx:exp:locality}

\begin{table}[t]
    \centering
    \small
    \caption{Effect of support set size $|\mathcal{N}(t)|$.}
    \label{tabsize}
    \begin{tabular}{@{}l*{10}{r}@{}}
        \toprule
        \multicolumn{11}{@{}l}{\textit{Task-Incremental}}\\
        \midrule
        \quad $|\mathcal{N}(t)|$ & 2 & 4 & 6 & 8 & 10 & 12 & 14 & 16 & 18 & 20 \\
        \quad $\rho$ & 0.9053 & 0.9078 & 0.9088 & 0.9079 & 0.9075 & 0.9066 & 0.9059 & 0.9047 & 0.9036 & 0.9029 \\
        \quad $r$ & 0.7957 & 0.8121 & 0.8181 & 0.8186 & 0.8189 & 0.8179 & 0.8167 & 0.8147 & 0.8128 & 0.8112 \\
        \quad $T~(10^{-4}\,\mathrm{s})$ & 39.7 & 40.0 & 40.0 & 40.0 & 40.5 & 40.7 & 40.9 & 41.1 & 41.2 & 41.3 \\
        \midrule
        \multicolumn{11}{@{}l}{\textit{Player-Incremental}}\\
        \midrule
        \quad $|\mathcal{N}(t)|$ & 5 & 10 & 15 & 20 & 25 & 30 & 35 & 40 & 45 & 50 \\
        \quad $\rho$ & 0.8589 & 0.8649 & 0.8696 & 0.8745 & 0.8789 & 0.8836 & 0.8886 & 0.8918 & 0.8944 & 0.8975 \\
        \quad $r$ & 0.8065 & 0.8995 & 0.9228 & 0.9329 & 0.9391 & 0.9441 & 0.9476 & 0.9500 & 0.9529 & 0.9554 \\
        \quad $T$ & 16.1 & 28.1 & 42.2 & 58.7 & 76.4 & 97.9 & 122.0 & 148.8 & 179.9 & 203.2 \\
        \bottomrule
    \end{tabular}
\end{table}
While the previous results evaluate the model-induced distance $d_\Gamma$ and support-set alignment, we further analyze the sensitivity of the local support $\mathcal{N}(t)$ to its size. The support size $|\mathcal{N}(t)|$ determines the size of the local coalition game. We sweep $|\mathcal{N}(t)|$ on WKNNs under both task-incremental and player-incremental settings, as shown in Table~\ref{tabsize}. In the task-incremental setting, quality saturates quickly: $\rho$ peaks at $0.9088$ when $|\mathcal{N}(t)|=6$ and changes only slightly as the support grows, while $T$ remains nearly constant at around $4 \times 10^{-3}\,\mathrm{s}$, approximately $4\,\mathrm{ms}$. This indicates that task-incremental interpolation requires only a small neighborhood once the effective local structure is captured. In contrast, player-incremental quality improves steadily with larger support sets, with $\rho$ increasing from $0.8589$ at $|\mathcal{N}(t)|=5$ to $0.8975$ at $|\mathcal{N}(t)|=50$. However, this improvement comes with a clear cost increase, as $T$ grows from $16.1$ seconds to $203.2$ seconds. These results suggest different operating points for the two regimes: a small support, around $|\mathcal{N}(t)|=6$, is sufficient for task-incremental updates, whereas player-incremental updates benefit from larger supports, with $|\mathcal{N}(t)|\in[25,35]$ providing a practical quality-cost trade-off.

\subsection{Anchor Selection Trade-off for Self-Valuation Matrix}
\label{apx:exp:anchor}
\begin{table}[t]
\centering
\small
\setlength{\tabcolsep}{6pt}
\renewcommand{\arraystretch}{1.15}
\caption{Effect of the anchor ratio $k/n$ on self-valuation build time and valuation quality.}
\label{tabanchor}
\begin{tabular}{@{}lccccc@{}}
\toprule
$k/n$ & 0.20 & 0.40 & 0.60 & 0.80 & 1.00 \\
\midrule
$\rho$ & 0.6297 & 0.8923 & 0.9018 & 0.9067 & 0.9090 \\
$r$    & 0.4735 & 0.7559 & 0.7965 & 0.8126 & 0.8189 \\
$T$    & 29.8   & 59.6   & 89.4   & 119.2  & 149.0 \\
\bottomrule
\end{tabular}
\end{table}

Self-valuation construction admits a quality--efficiency trade-off controlled by the anchor ratio $k/n$: smaller anchor ratios reduce the number of columns to precompute and maintain, but provide weaker coverage for task interpolation under Theorem~\ref{thm:test}. We sweep $k/n$ from $0.20$ to $1.00$ on WKNNs and report build time together with task-incremental valuation quality, measured by Pearson $r$ and Spearman $\rho$ against the full-budget Global-MC reference.

Table~\ref{tabanchor} shows a clear trade-off between construction cost and valuation quality. Build time grows linearly with $k/n$, as expected from the per-anchor cost of shared subset scheduling, with each $0.20$ increment adding about $30$ seconds. Quality, in contrast, saturates quickly: $\rho$ rises from $0.6297$ at $k/n=0.20$ to $0.8923$ at $0.40$, while $r$ increases from $0.4735$ to $0.7559$. Beyond this point, additional anchors yield smaller gains, with $\rho$ improving by less than $0.02$ as $k/n$ increases from $0.40$ to $1.00$. This pattern is consistent with the coverage analysis: once anchors are sufficiently dense under $d_\Gamma$, most incoming tasks can be interpolated from nearby precomputed columns, and additional anchors mainly refine an already stable estimate. In practice, $k/n\in[0.4,0.6]$ offers a favorable operating point, achieving near-saturated valuation quality at $40$--$60\%$ of the full-budget construction cost.

\section{Additional Cases for Shapley Matrix Maintenance}
\label{apx:dynamic}

D-Shap also supports deletion, replacement, and simultaneous task--player updates under the same matrix-maintenance principle. The key idea is to treat each dynamic event as a localized update of the player-by-task Shapley matrix $\Phi$: task-side changes modify columns, while player-side changes modify rows and only the local blocks whose support sets change.

\paragraph{Task deletion and replacement.}
For task-side changes, deleting a task $t$ simply removes its corresponding column $\Phi_{:,t}$ from $\Phi$. Since the player set and all other task utilities remain unchanged, no other entries need to be updated. Replacing a task $t$ with a new task $\tilde t$ can be handled as deletion followed by insertion: D-Shap first removes the old column $\Phi_{:,t}$ and then estimates the new task valuation $\hat{\phi}(\tilde t)$ using the interpolation rule in Eq.~\eqref{eqn:task_update}. Thus, task replacement only changes one column of the maintained matrix.

\paragraph{Player deletion and replacement.}
For player-side changes, deleting a player $z$ removes its row from $\Phi$ and updates only tasks whose local support changes. Let $N(t)$ and $N^{-}(t)$ denote the support set of task $t$ before and after deleting $z$, respectively. The affected task set is
\[
R^{-}(z)=\{t\in T: N^{-}(t)\neq N(t)\}.
\]
For any $t\notin R^{-}(z)$, coalition locality implies that the remaining entries are unchanged. For any $t\in R^{-}(z)$, D-Shap recomputes only the local game induced by $N^{-}(t)$ and reuses all entries outside the affected local block. In particular, players in $N^{-}(t)$ are revalued under the updated local game, players removed from the support receive zero contribution, and players outside $N(t)\cup N^{-}(t)$ remain unchanged.

Replacing a player $z$ with $\tilde z$ is handled as deletion followed by insertion. D-Shap first deletes the row corresponding to $z$, updates the tasks in $R^{-}(z)$, and then inserts $\tilde z$ as a new player using the player-incremental update rule in Section~\ref{sec:method}. Equivalently, the affected task set is the union of the tasks affected by deletion and insertion:
\[
R(z\rightarrow \tilde z)
=
\{t\in T: N^{-}(t)\neq N(t)\}
\cup
\{t\in T: N^{+}(t)\neq N^{-}(t)\},
\]
where $N^{+}(t)$ is the support set after inserting $\tilde z$. D-Shap updates only the local games associated with tasks in $R(z\rightarrow \tilde z)$ and reuses all other entries.

\paragraph{Simultaneous task and player arrivals.}
In practice, new tasks and new players may arrive at the same time. Let $\Delta T$ denote the newly arriving tasks and $\Delta D$ denote the newly arriving players. A simple strategy is to process the two updates sequentially. D-Shap first applies the player update to obtain the expanded player set $D^{+}=D\cup \Delta D$ and the updated matrix over existing tasks. It then estimates the columns for new tasks in $\Delta T$ under the updated player set $D^{+}$. This order is preferable when accuracy is prioritized, because the valuation of a new task should be defined with respect to the current player set, including newly arrived players. If task interpolation were performed before inserting new players, the resulting new columns would be estimated under the old game and would still need to be corrected for the newly added players.

More concretely, D-Shap performs the following two-stage update:
\[
\Phi
\;\xrightarrow{\text{player update on } \Delta D}\;
\Phi^{+}_{D,T}
\;\xrightarrow{\text{task update on } \Delta T}\;
\Phi^{+}_{D^{+},T\cup \Delta T}.
\]
The first stage updates only the affected columns
\[
R(\Delta D)=\{t\in T: N^{+}(t)\neq N(t)\},
\]
where $N^{+}(t)$ is computed after inserting $\Delta D$. The second stage estimates each new task column $\hat{\phi}(t')$ for $t'\in \Delta T$ using neighboring columns from the already updated matrix $\Phi^{+}_{D,T}$. Thus, the interpolation anchors and the resulting task valuations are consistent with the expanded player set.

\paragraph{Joint local update for higher efficiency.}
When $\Delta T$ and $\Delta D$ are both small, the sequential update is already efficient. However, for larger update batches, D-Shap can further improve efficiency by using a joint affected-block update. Instead of first updating all affected old tasks and then estimating all new tasks independently, D-Shap forms the joint affected task set
\[
R_{\mathrm{joint}}(\Delta D,\Delta T)
=
R(\Delta D)\cup \Delta T.
\]
For each $t\in R_{\mathrm{joint}}(\Delta D,\Delta T)$, D-Shap constructs the updated support $N^{+}(t)$ under the expanded player set $D^{+}$. It then recomputes exact local Shapley values only when necessary, namely when the task is poorly covered by existing anchors or when the support change is large. Otherwise, it estimates the column by interpolation from updated neighboring columns. This gives the following update rule:
\[
\hat{\phi}^{+}(t)=
\begin{cases}
\mathrm{LocalShapley}(N^{+}(t),t), & \min_{a\in A} d_{\Gamma}(t,a)>\tau
\text{ or } |N^{+}(t)\triangle N(t)|>\kappa,\\
\sum_{a\in A(t)} w(t,a)\phi^{+}(a), & \text{otherwise},
\end{cases}
\]
where $A(t)$ denotes the neighboring anchor tasks under $d_{\Gamma}$, $\tau$ is the coverage threshold, and $\kappa$ controls when support changes are large enough to justify explicit local recomputation.

This joint strategy improves efficiency by avoiding duplicate work. If a new player changes the support of a task that is also used as an interpolation anchor for new tasks, D-Shap updates that anchor once and reuses it for all downstream task interpolations. It can also improve quality because new task columns are interpolated from columns that have already been made consistent with the updated player set. Thus, simultaneous task and player arrivals are handled as a single localized maintenance operation over the affected block of $\Phi$, rather than as global recomputation.

\paragraph{Summary.}
Task addition, deletion, and replacement correspond to adding, removing, or replacing columns of $\Phi$. Player addition, deletion, and replacement correspond to adding, removing, or replacing rows, together with localized recomputation over tasks whose support sets change. When task and player updates occur simultaneously, D-Shap processes player updates first for consistency, or uses a joint affected-block update for better efficiency. Therefore, common dynamic cases are unified as localized maintenance of the player-by-task Shapley matrix.
\clearpage
\section{Pseudocode}\label{app:code}
\begin{algorithm}[H]
\scriptsize
\caption{Task-Incremental Valuation}
\label{alg:task_interpolation}
\begin{algorithmic}[1]
\Require Players $\mathcal D$; anchors $\mathcal A$; Shapley matrix $\Phi_{\mathcal A}^{\mathcal D}$;
distance $d_\Gamma$; new task $t'$; neighborhood size $K$.
\State Find the $K$ nearest anchors
\[
    \mathcal A_K(t')=\operatorname{KNN}_{a\in\mathcal A}(t';d_\Gamma).
\]
\State Compute convex weights $w(t',a)$ over $\mathcal A_K(t')$ with
$w(t',a)\ge0$ and $\sum_{a\in\mathcal A_K(t')}w(t',a)=1$.
\State Estimate
\[
    \widehat{\phi}^{\mathcal D}(t')
    =
    \sum_{a\in\mathcal A_K(t')}
    w(t',a)\,\Phi_{\mathcal A}^{\mathcal D}[:,a].
\]
\State \Return $\widehat{\phi}^{\mathcal D}(t')$.
\end{algorithmic}
\end{algorithm}
\begin{algorithm}[H]
\scriptsize
\caption{Player-Incremental Valuation}
\label{alg:lbu}
\begin{algorithmic}[1]
\Require Training players $\mathcal D$; anchor set $\mathcal A$;
Shapley matrix $\Phi_{\mathcal A}^{\mathcal D}$;
support extractor $\mathcal{N}(\cdot)$; local utility $v_a(\cdot)$;
new player $z'$.
\State Set $\mathcal D^+\leftarrow\mathcal D\cup\{z'\}$ and initialize
$\Phi_{\mathcal A}^{\mathcal D^+}$ by appending a zero row (for $z'$) to
$\Phi_{\mathcal A}^{\mathcal D}$.
\State Compute $\mathcal{N}(a)\subseteq\mathcal D$ and
$\mathcal{N}^+(a)\subseteq\mathcal D^+$ for each $a\in\mathcal A$, and
identify affected anchors
\[
    \mathcal R(z')=\{a\in\mathcal A:\mathcal{N}^+(a)\neq \mathcal{N}(a)\}.
\]
\State For $a\notin\mathcal R(z')$: keep $\Phi_{\mathcal A}^{\mathcal D^+}[\,\cdot\,,a]$ unchanged;
$\Phi_{\mathcal A}^{\mathcal D^+}[z',a]$ is already $0$ from line~1.
\For{each $a\in\mathcal R(z')$}
    \State Set $\Phi_{\mathcal A}^{\mathcal D^+}[z,a]\leftarrow 0$ for $z\in \mathcal{N}^+(a)$. 
    \State Set $\Phi_{\mathcal A}^{\mathcal D^+}[z,a]\leftarrow 0$ for $z\in \mathcal{N}(a)\setminus\mathcal{N}^+(a)$. 
    \ForAll{subsets $S\subseteq \mathcal{N}^+(a)$}
        \State Evaluate $v_a(S)$.
        \ForAll{$z\in \mathcal{N}^+(a)$}
            \If{$z\in S$}
                \State $\Phi_{\mathcal A}^{\mathcal D^+}[z,a]
                \mathrel{+}=
                \dfrac{v_a(S)}{|\mathcal{N}^+(a)|\cdot\binom{|\mathcal{N}^+(a)|-1}{|S|-1}}$
            \Else
                \State $\Phi_{\mathcal A}^{\mathcal D^+}[z,a]
                \mathrel{-}=
                \dfrac{v_a(S)}{|\mathcal{N}^+(a)|\cdot\binom{|\mathcal{N}^+(a)|-1}{|S|}}$
            \EndIf
        \EndFor
    \EndFor
\EndFor
\State \Return $\Phi_{\mathcal A}^{\mathcal D^+}$.
\end{algorithmic}
\end{algorithm}
\begin{algorithm}[H]
\scriptsize
\caption{Self-Valuation Matrix Construction via Shared Subset Scheduling}
\label{alg:self_val_pivot}
\begin{algorithmic}[1]
\Require Training players $\mathcal{D} = \{z_1, \ldots, z_n\}$;
model-induced distance $d_\Gamma$; anchor budget $k$;
local utility $v_a(\cdot)$.
\State Select anchors $\mathcal{A} = \{a_1, \ldots, a_k\} \subseteq \mathcal{D}$ by farthest-point sampling under $d_\Gamma$, fixing an ordering $\Pi_{\mathcal{A}}$.
\State Initialize $\Phi_{\mathcal{A}}^{\mathcal{D}} \in (\mathbb{R} \cup \{\mathrm{NA}\})^{n \times k}$ with
$\Phi_{\mathcal{A}}^{\mathcal{D}}[a, a] \leftarrow \mathrm{NA}$ for $a \in \mathcal{A}$
and $\Phi_{\mathcal{A}}^{\mathcal{D}}[z, a] \leftarrow 0$ otherwise.
\For{each anchor $a \in \mathcal{A}$ in order $\Pi_{\mathcal{A}}$}
    \ForAll{subsets $S \subseteq \mathcal{D} \setminus \{a\}$}
        \State $\mathcal{R}_S \leftarrow \mathcal{A} \setminus S$ 
        \State $a^* \leftarrow$ first anchor in $\mathcal{R}_S$ under $\Pi_{\mathcal{A}}$
        \If{$a = a^*$}
            \State Train $\theta(S)$ and evaluate $v_{a'}(S)$ for all $a' \in \mathcal{R}_S$
            \ForAll{$a' \in \mathcal{R}_S$}
                \ForAll{$z \in \mathcal{D} \setminus \{a'\}$}
                    \If{$z \in S$}
                        \State $\Phi_{\mathcal{A}}^{\mathcal{D}}[z, a'] \mathrel{+}=
                        \dfrac{v_{a'}(S)}{(n-1) \cdot \binom{n-2}{|S| - 1}}$
                    \Else
                        \State $\Phi_{\mathcal{A}}^{\mathcal{D}}[z, a'] \mathrel{-}=
                        \dfrac{v_{a'}(S)}{(n-1) \cdot \binom{n-2}{|S|}}$
                    \EndIf
                \EndFor
            \EndFor
        \EndIf
    \EndFor
\EndFor
\State \Return $\mathcal{A}, \Phi_{\mathcal{A}}^{\mathcal{D}}$.
\end{algorithmic}
\end{algorithm}
\begin{algorithm}[H]
\scriptsize
\caption{Online Anchor Expansion}
\label{alg:adaptive_expansion}
\begin{algorithmic}[1]
\Require Anchor set $\mathcal A$ with ordering $\Pi_{\mathcal A}$; matrix $\Phi_{\mathcal A}^{\mathcal D}$;
distance $d_\Gamma$; expansion threshold $\tau$; incoming task $t'$.
\State Compute coverage radius $r(t';\mathcal A)=\min_{a\in\mathcal A} d_\Gamma(t',a)$.
\If{$r(t';\mathcal A)\le\tau$}
    \State Estimate $\widehat{\phi}^{\mathcal D}(t')$ by Algorithm~\ref{alg:task_interpolation}.
\Else
    \State Compute $\phi^{\mathcal D}(t')$ by exact or high-budget estimation.
    \State Append $\phi^{\mathcal D}(t')$ as a new column to $\Phi_{\mathcal A}^{\mathcal D}$;
    set $\mathcal A\leftarrow\mathcal A\cup\{t'\}$ and append $t'$ to $\Pi_{\mathcal A}$.
\EndIf
\State \Return $\mathcal A,\ \Phi_{\mathcal A}^{\mathcal D}$, valuation for $t'$.
\end{algorithmic}
\end{algorithm}
\clearpage
\end{document}